\documentclass[10pt,twocolumn,a4paper]{article}

\usepackage[a4paper, top=2.5cm, bottom=2.5cm, left=2cm, right=2cm, columnsep=0.75cm]{geometry}

\usepackage{xcolor}
\usepackage{times}
\usepackage{microtype}
\usepackage{soul}
\usepackage{url}
\usepackage[hidelinks]{hyperref}
\usepackage[utf8]{inputenc}
\usepackage[small]{caption}
\usepackage{graphicx}
\usepackage{amsmath}
\usepackage{mathtools}
\usepackage{cases}
\usepackage{amsthm}
\usepackage{amsfonts}
\usepackage{booktabs}
\usepackage{algorithm}
\usepackage{algorithmic}
\usepackage{textgreek}
\usepackage{csquotes}
\MakeOuterQuote{"}
\usepackage{newunicodechar}
\newunicodechar{μ}{\textmu}
\newunicodechar{ω}{\textomega}
\makeatletter

\def\normalsize{\@setfontsize\normalsize\@xpt{11}}
\def\small{\@setfontsize\small\@ixpt{10}}
\def\footnotesize{\@setfontsize\footnotesize\@ixpt{10}}
\def\scriptsize{\@setfontsize\scriptsize\@viipt{10}}
\def\tiny{\@setfontsize\tiny\@vipt{7}}
\def\large{\@setfontsize\large\@xipt{12}}
\def\Large{\@setfontsize\Large\@xiipt{14}}
\def\LARGE{\@setfontsize\LARGE\@xivpt{16}}
\def\huge{\@setfontsize\huge\@xviipt{20}}
\def\Huge{\@setfontsize\Huge\@xxpt{23}}

\newlength\titlebox 
\def\maketitle{\par
  \begingroup
  \def\thefootnote{\fnsymbol{footnote}}
  \def\@makefnmark{$^{\@thefnmark}$}
  \twocolumn[\@maketitle] \@thanks
  \endgroup
  \setcounter{footnote}{0}
  \let\maketitle\relax \let\@maketitle\relax
\gdef\@thanks{}\gdef\@author{}\gdef\@title{}\let\thanks\relax}

\def\@maketitle{%
  \newsavebox{\titlearea}
  \sbox{\titlearea}{%
    \let\footnote\thanks\relax
    \vbox{%
      \hsize\textwidth \linewidth\hsize
      \vskip 0.5in
      \centering
      {\LARGE\bf \@title \par}%
      \vskip 0.1in
      {%
        \def\and{\unskip\thinspace{\rm ,}\enspace}%
        \def\And{\unskip\enspace{\rm and}\enspace}%
        \def\affiliations{\egroup\par\Large\bgroup\rm}%
        \def\emails{\egroup\par\Large\bgroup\rm}%
        \bgroup\Large\bf\@author\egroup%
      }%
      \vskip 0.2in
    }%
  }%
  \newlength\actualheight
  \settoheight{\actualheight}{\usebox{\titlearea}}
  \ifdim\actualheight>\titlebox
  \setlength{\titlebox}{\actualheight}
  \fi
  \setcounter{footnote}{0}
  \vbox to \titlebox{%
    \def\thanks##1{\footnotemark}\relax
    \hsize\textwidth \linewidth\hsize
    \vskip 0.5in
    \centering
    {\LARGE\bf \@title \par}%
    \vskip 0.2in plus 4fil minus 0.1in
    {%
      \def\and{\unskip\thinspace{\rm ,}\enspace}%
      \def\And{\unskip\enspace{\rm and}\enspace}%
      \def\affiliations{\egroup\vskip 0.05in minus 0.05in\par\bgroup\Large\rm}%
      \def\emails{\egroup\vskip 0.05in minus 0.05in\par\bgroup\Large\rm}%
      \bgroup\Large\bf\@author\egroup%
    }%
    \vskip 0.3in plus 8fil minus 0.1in
  }%
}

\renewenvironment{abstract}{%
  \centerline{\bf Abstract}\vspace{0.5ex}
  \begin{quote}\small
  }{%
    \par
  \end{quote}\vskip 1ex
}

\def
\section{\@startsection{section}{1}{\z@}{-2.0ex plus -0.5ex minus -.2ex}{3pt plus 2pt minus 1pt}{\Large\bf\centering}}
\def
\subsection{\@startsection{subsection}{2}{\z@}{-2.0ex plus -0.5ex minus -.2ex}{3pt plus 2pt minus 1pt}{\large\bf\raggedright}}
\def
\subsubsection{\@startsection{subparagraph}{3}{\z@}{-6pt plus -2pt minus -1pt}{-1em}{\normalsize\bf}}
\skip\footins 9pt plus 4pt minus 2pt
\def\footnoterule{\kern-3pt \hrule width 5pc \kern 2.6pt}
\labelwidth\leftmargini\advance\labelwidth-\labelsep \labelsep 5pt
\def\@listi{\leftmargin\leftmargini}
\def\@listii{\leftmargin\leftmarginii
  \labelwidth\leftmarginii\advance\labelwidth-\labelsep
  \topsep 2pt plus 1pt minus 0.5pt
  \parsep 1pt plus 0.5pt minus 0.5pt
\itemsep \parsep}
\def\@listiii{\leftmargin\leftmarginiii
  \labelwidth\leftmarginiii\advance\labelwidth-\labelsep
  \topsep 1pt plus 0.5pt minus 0.5pt
  \parsep \z@ \partopsep 0.5pt plus 0pt minus 0.5pt
\itemsep \topsep}

\belowdisplayskip \abovedisplayskip
\def\leftcite{(}\def\rightcite{)}
\def\cite{\def\citeauthoryear##1##2{\def\@thisauthor{##1}%
\ifx\@lastauthor\@thisauthor\relax\else##1 \fi ##2}\@icite}
\def\shortcite{\def\citeauthoryear##1##2{##2}\@icite}
\def\citeauthor{\def\citeauthoryear##1##2{##1}\@nbcite}
\def\citeyear{\def\citeauthoryear##1##2{##2}\@nbcite}
\def\@icite{\leavevmode\def\@citeseppen{-1000}%
  \def\@cite##1##2{\leftcite\nobreak\hskip 0in{##1\if@tempswa , ##2\fi}\rightcite}%
  \@ifnextchar[{\@tempswatrue\@citex}{\@tempswafalse\@citex[]}}
  \def\@nbcite{\leavevmode\def\@citeseppen{1000}%
    \def\@cite##1##2{{##1\if@tempswa , ##2\fi}}%
    \@ifnextchar[{\@tempswatrue\@citex}{\@tempswafalse\@citex[]}}
    \def\@citex[#1]#2{%
      \def\@lastauthor{}\def\@citea{}%
      \@cite{\@for\@citeb:=#2\do
        {\@citea\def\@citea{;\penalty\@citeseppen\ }%
          \if@filesw\immediate\write\@auxout{\string\citation{\@citeb}}\fi
          \@ifundefined{b@\@citeb}{\def\@thisauthor{}{\bf ?}\@warning
          {Citation `\@citeb' on page \thepage \space undefined}}%
    {\csname b@\@citeb\endcsname}\let\@lastauthor\@thisauthor}}{#1}}

    \def\@lbibitem[#1]#2{
    \item\if@filesw
      {\def\protect##1{\string ##1\space}\immediate
    \write\@auxout{\string\bibcite{#2}{#1}}}\fi\ignorespaces}
    \def\@biblabel#1{}
    \def\thebibliography#1{
      \section*{References\@mkboth{REFERENCES}{REFERENCES}}\list
      {}{\labelwidth 0in\leftmargin\labelwidth
      \itemsep .01in}%
      \def\newblock{\hskip .11em plus .33em minus .07em}%
      \sloppy\clubpenalty4000\widowpenalty4000
    \sfcode`\.=1000\relax}

    \def\thm@space@setup{%
      \thm@preskip=0pt
      \thm@postskip=0pt
    }

    \makeatother

    \newtheorem{theorem}{Theorem}
    \newtheorem{definition}{Definition}
    \newtheorem{proposition}{Proposition}
    \newtheorem{lemma}{Lemma}
    \newtheorem{corollary}{Corollary}

    \title{On Halting vs Converging in Recurrent Graph Neural Networks}

    \author{%
      Jeroen Bollen\and
      Stijn Vansummeren\and
      \affiliations
      Hasselt University\\
      \emails
      jeroen.bollen@uhasselt.be, stijn.vansummeren@uhasselt.be
    }

    \newcommand{\concat}{\mathbin{|}}

    \newcommand{\Nat}{\mathbb{N}}
    \newcommand{\Real}{\mathbb{R}}
    \newcommand{\Bool}{\mathbb{B}}

    \newcommand{\card}[1]{\lvert #1 \rvert}
    \newcommand{\Mset}[1]{\mathcal{M}(#1)}
    \newcommand{\supp}{\mathrm{supp}}
    \newcommand{\mset}[1]{\left\{\!\!\{#1\}\!\!\right\}}

    \newcommand{\Graphs}[1]{\mathcal{G}[#1]}
    \newcommand{\nbr}[2]{\mathrm{nbr}_{#1}(#2)}

    \newcommand{\trafficlights}{\mathcal{T}}
    \newcommand{\adv}{\mathrm{adv}}
    \newcommand{\ret}{\mathrm{ret}}
    \newcommand{\enc}{\mathrm{enc}_3}
    \newcommand{\BEHIND}{\text{\textsc{behind}}}
    \newcommand{\ALIGNED}{\text{\textsc{aligned}}}
    \newcommand{\EAGER}{\text{\textsc{eager}}}

\begin{document}

    \pagestyle{plain}

    \maketitle

    \begin{abstract}
      Recurrent Graph Neural Networks (RGNNs) extend standard GNNs by iterating message-passing until some stopping condition is met. Various RGNN models have been proposed in the literature. In this paper, we study three such models: converging RGNNs, where all vertex representations must stabilise; output-converging RGNNs, where only the output classifications must stabilise; and halting RGNNs, where a per-vertex halting classifier determines when to stop. We establish expressiveness relationships between these models: over undirected graphs, converging RGNNs are equally expressive as graded-bisimulation-invariant halting RGNNs, while output-converging RGNNs are at least as expressive. Combined with prior results on halting RGNNs, this shows that, relative to the classifiers expressible in monadic second-order logic (MSO), converging RGNNs express exactly the graded modal μ-calculus (μGML), and output-converging RGNNs express at least μGML. These results hold even when restricting to ReLU networks with sum aggregation. The main technical challenge is simulating halting RGNNs by converging ones: without a global halting classifier, vertices may locally decide to halt at different times, causing desynchronisation. We develop a ``traffic-light'' protocol that enables vertices to coordinate despite this asynchrony. Our results answer an open question from Bollen et al.~\shortcite{Bollen-2025-Halting-Recurrent-GNNs} and show that the RGNN model of Pflueger et al.~\shortcite{Pflueger-2024-Recurrent-Graph-Neural} retains full μGML expressiveness even when convergence is guaranteed.
    \end{abstract}

    \section{Introduction}\label{sec:intro}

    Graph Neural Networks (GNNs) are a class of machine-learning models operating on graphs. The most common GNN architecture is the \emph{aggregate-combine} GNN (AC-GNN)~\cite{Gilmer-2017-Neural-message-passing,Hamilton-2020-Graph-Representation}. In each layer of an AC-GNN, every vertex first \emph{aggregates} the feature vectors of its neighbours into a single summary, and then \emph{combines} this summary with its own feature vector to produce an updated feature vector. This simple message-passing scheme underlies most practical AC-GNN variants, including Graph Convolutional Networks \cite{Kipf-2017-Semi-Supervised-Classification}, GraphSAGE \cite{Hamilton-2017-Inductive-Representation-Learning}, Graph Isomorphism Networks \cite{Xu-2019-How-Powerful-are}, and Principal Neighbourhood Aggregation \cite{Corso-2020-Principal-Neighbourhood-Aggregation}. Throughout this paper, when we refer to GNNs, we mean AC-GNNs.

    A natural question is: what can these networks compute? The \emph{expressiveness} of a GNN architecture refers to the class of vertex classifiers it can express. Understanding expressiveness reveals fundamental capabilities and limitations of an architecture, and provides a basis for comparing different architectures. A fundamental property of AC-GNNs is their invariance under graded bisimulation~\cite{Morris-2019-Weisfeiler-and-Leman,Xu-2019-How-Powerful-are}, which has enabled a productive line of research characterising GNN expressiveness through modal logics.

    For fixed-depth GNNs, this expressiveness is well understood. Barceló et al.~\shortcite{Barceló-2020-The-Logical-Expressiveness} showed that the classifiers expressible by both fixed-depth GNNs and first-order logic are precisely those definable in graded modal logic. For recurrent GNNs (RGNNs), which repeatedly apply a single layer to produce a sequence of increasingly refined feature vectors, the expressiveness is less understood, and is tightly linked to the question of how and especially when should the output be read.

    In this paper, we study three output semantics inspired by existing work and study their relative expressiveness: \emph{converging} RGNNs, where the entire feature vector of every vertex must stabilise~\cite{Scarselli-2009-The-Graph-Neural} (Definition~\ref{def:converging-rgnn}); \emph{output-converging} RGNNs, where only the classification must stabilise~\cite{Pflueger-2024-Recurrent-Graph-Neural} (Definition~\ref{def:output-converging-rgnn}); and \emph{halting} RGNNs, where an explicit halting classifier determines when to stop~\cite{Bollen-2025-Halting-Recurrent-GNNs} (Definition~\ref{def:halting-rgnn}). Our formalisations differ from these references in some respects; we give precise definitions in Section~\ref{sec:rgnn} and discuss the differences. Our main result is that, over undirected graphs, converging RGNNs have exactly the same expressive power as graded-bisimulation-invariant halting RGNNs (Theorem~\ref{thm:equivalence}). This resolves an open question posed by Bollen et al.~\shortcite{Bollen-2025-Halting-Recurrent-GNNs}, who asked whether it is possible to achieve fully convergent termination.

    The equivalence follows from two constructions that simulate one model with the other (Propositions~\ref{prop:converging-to-halting} and~\ref{prop:halting-to-converging}). The halting-to-converging direction proves especially challenging: in a halting RGNN, the computation stops only when all vertices satisfy the halting classifier simultaneously, providing a global synchronisation mechanism. A converging RGNN has no such mechanism, so vertices must coordinate among themselves when to stop. We solve this using a local coordination scheme detailed in Section~\ref{sec:halt-to-conv}. We also consider "simple" RGNNs, which use only standard neural network components; the converging-to-halting construction works even for simple RGNNs, while the halting-to-converging construction requires mild restrictions on the halting RGNN.

    Since every converging RGNN is trivially also an output-converging RGNN, our equivalence implies that output-converging RGNNs are at least as expressive as graded-bisimulation-invariant halting RGNNs. Combined with a recent result of Bollen et al.~\shortcite{Bollen-2025-Halting-Recurrent-GNNs}, we obtain that every vertex classifier definable in the graded modal μ-calculus (μGML) is expressible by a simple converging RGNN, and hence also by a simple output-converging RGNN. In the converse direction, converging and output-converging RGNNs are both invariant under graded bisimulation. Colcombet et al.~\shortcite{Colcombet-2025-Tree-Algebras-and} recently proved the finitary Janin-Walukiewicz theorem for standard bisimulation; if this extends to graded bisimulation, it follows that every MSO-definable classifier expressible by a converging or output-converging RGNN is also expressible in μGML. Ahvonen et al.~\shortcite{Ahvonen-2025-Graph-neural-networks} claim this characterisation in a preprint, proceeding via distributed message-passing automata; their work also relies on this extension, has not yet been published, and does not establish the characterisation for simple RGNNs. Our main contribution, the structural equivalence (Theorem~\ref{thm:equivalence}), is independent of this open question.

    Section~\ref{sec:prelim} introduces notation and AC-layers. Section~\ref{sec:rgnn} defines recurrent GNNs and their three output semantics. Section~\ref{sec:main} states the main results. Sections~\ref{sec:conv-to-halt} and~\ref{sec:halt-to-conv} prove the two directions of Theorem~\ref{thm:equivalence}. Section~\ref{sec:related} discusses related work, and Section~\ref{sec:discussion} discusses limitations and open questions.

    \section{Preliminaries}\label{sec:prelim}

    We denote the booleans, natural numbers including zero, and real numbers by \(\Bool\), \(\Nat\), and \(\Real\), respectively. For the booleans \(\Bool = \{0,1\}\), we interpret \(0\) as false and \(1\) as true. Given a set \(X\), we write \(\card{X}\) for its cardinality.

    A \emph{finite multiset} over a set \(X\) is a function \(M : X \to \Nat\) with support \(\supp(M) := \{x \in X \mid M(x) > 0\}\). Intuitively, \(M(x)\) denotes the multiplicity of \(x\) in \(M\). We use double curly braces \(\mset{...}\) to denote multisets.

    We denote the concatenation of two vectors \(x\) and \(y\) by \(x \concat y\). We write \([x]_i\) for the \(i\)-th component of a vector \(x\). The \(L_1\) norm of a vector \(x \in \Real^d\) is \(\|x\|_1 = \sum_{i=1}^d |[x]_i|\).

    \paragraph{Graphs.}
    A \emph{labelled graph} over a set of labels \(X\) is a triple \(G = (V, E, \lambda)\) where \(V\) is a finite set of vertices, \(E \subseteq V \times V\) is the edge relation, and \(\lambda : V \to X\) is the vertex labelling function. We say \(G\) is \emph{undirected} if \(E\) is symmetric.

    Throughout, we will always assume graphs are undirected. We write \(G(v)\) for the label of vertex \(v\) in \(G\), and define the neighbourhood of \(v\) as \(\nbr{G}{v} := \{u \in V \mid (v,u) \in E\}\). We write \(\Graphs{X}\) for the set of all finite undirected \(X\)-labelled graphs.

    \paragraph{Vertex Label Transformers.}

    Our primary objects of study are functions that transform the labels of a graph while preserving its structure.

    \begin{definition}[Label transformer]\label{def:label-transformer}
      A \emph{label transformer} from \(X\) to \(Y\) is a function \(f : \Graphs{X} \to \Graphs{Y}\) that, given an \(X\)-labelled graph \(G = (V, E, \lambda_1)\), produces a \(Y\)-labelled graph \(f(G) = (V, E, \lambda_2)\)
      . In other words, \(f\) preserves the vertex set and edge relation of the input graph, modifying only the vertex labels.
    \end{definition}

    Given a function \(h : X \to Y\), its \emph{lifting} \(h^\uparrow : \Graphs{X} \to \Graphs{Y}\) is the label transformer \(h^\uparrow(G) = (V, E, h \circ \lambda)\) that applies \(h\) to each vertex label of \(G\).

    \begin{definition}[Vertex classifier]\label{def:vertex-classifier}
      A \emph{vertex classifier} on \(X\) is a label transformer \(f : \Graphs{X} \to \Graphs{\Bool}\).
    \end{definition}

    Vertex classifiers are the central notion of expressiveness for GNNs: we say that a GNN \emph{expresses} a vertex classifier \(f\) if, on every input graph \(G\), the GNN produces the same Boolean labelling as \(f\).

    For a finite label set \(X\), write \(E\) for the edge relation and, for each \(q \in X\), write \(P_q\) for the set of vertices with label \(q\). A monadic second-order logic (MSO) formula \(\varphi(x)\) with one free vertex variable, or a μGML formula \(\varphi\), over these predicates defines a vertex classifier on \(X\), where vertex \(v\) is classified as true precisely when \(G \models \varphi(v)\). We say a vertex classifier is \emph{MSO-definable}, respectively \emph{definable in μGML}, if it is defined by such a formula.

    \paragraph{GNN Layers.}

    The basic computational unit of a graph neural network is the \emph{aggregate-combine layer}, which updates vertex labels by combining each vertex's current label with an aggregated summary of its neighbours' labels.

    \begin{definition}[Aggregate-combine layer]\label{def:ac-layer}
      An \emph{aggregate-combine layer} (AC-layer) of input dimension \(p\) and output dimension \(q\) is a label transformer \(L : \Graphs{\Real^p} \to \Graphs{\Real^q}\) of the form
      \begin{align}\label{eq:ac-layer}
        L(G)(v) = \mathrm{CMB}\Bigl(&G(v),\nonumber\\
        &\mathrm{AGG}\bigl(\mset{G(u) \mid u \in \nbr{G}{v}}\bigr)\Bigr),
      \end{align}
      where \(\mathrm{AGG} : \Mset{\Real^p} \to \Real^p\) is its \emph{aggregation function} and \(\mathrm{CMB} : \Real^p \times \Real^p \to \Real^q\) is its \emph{combination function}.
    \end{definition}

    While in their full generality, the \(\mathrm{AGG}\) and \(\mathrm{CMB}\) functions of an AC-layer can be arbitrary, in practice they are often instantiated to be of the following simple form.

    A function \(f : \Real^p \to \Real^q\) is called \emph{simple} if it is expressible as a feedforward neural network with ReLU activations, i.e., if \(f\) is of the form \(A_\ell \circ \mathrm{ReLU} \circ A_{\ell-1} \circ \cdots \circ \mathrm{ReLU} \circ A_1\), with each \(A_i\) an affine transformation and \([\mathrm{ReLU}(x)]_i = \max(0, [x]_i)\). A \emph{simple aggregation function} is the summation of its inputs, i.e., \(\mathrm{AGG}(M) = \sum_{x \in \supp(M)} M(x) \cdot x\). A \emph{simple combination function} is one of the form \(\mathrm{CMB}(x, y) = f(x \concat y)\) for some simple function \(f : \Real^{2p} \to \Real^q\). A \emph{simple AC-layer} is an AC-layer whose aggregation function is a simple aggregation function and whose combination function is a simple combination function.

    \section{Recurrent Graph Neural Networks}\label{sec:rgnn}

    A recurrent GNN iterates a single AC-layer, producing a sequence of feature graphs.

    \begin{definition}[Recurrent GNN]\label{def:rgnn}
      A \emph{recurrent graph neural network} (RGNN) over a label set \(X\) is a tuple \(\mathcal{R} = (\mathrm{In}, L, \mathrm{Out})\) where:
      \begin{itemize}
        \item \(\mathrm{In} : X \to \Real^d\) is the \emph{initialisation function};
        \item \(L : \Graphs{\Real^d} \to \Graphs{\Real^d}\) is an AC-layer, called the \emph{update function}; and
        \item \(\mathrm{Out} : \Real^d \to \Bool\) is the \emph{readout function}.
      \end{itemize}
      We call \(d\) the \emph{dimension} of \(\mathcal{R}\).
    \end{definition}

    \begin{definition}[Run]\label{def:run}
      The \emph{infinite run} of an RGNN \(\mathcal{R} = (\mathrm{In}, L, \mathrm{Out})\) on an input graph \(G \in \Graphs{X}\) is the sequence \(H_0, H_1, \ldots\) of \(\Real^d\)-labelled graphs where \(H_0 = \mathrm{In}^\uparrow(G)\) and \(H_{i+1} = L(H_i)\) for all \(i \geq 0\). A \emph{run} is any finite prefix \(H_0, \ldots, H_k\) of the infinite run. The \emph{output} of a run \(H_0, \ldots, H_k\) is the \(\Bool\)-labelled graph \(\mathrm{Out}^\uparrow(H_k)\).
    \end{definition}

    Different output semantics yield different architectures with different expressiveness. Throughout the following definitions, let \(\mathcal{R} = (\mathrm{In}, L, \mathrm{Out})\) be a fixed RGNN of dimension \(d\).

    \begin{definition}[Converging RGNN]\label{def:converging-rgnn}
      A run \(H_0, \ldots, H_k\) \emph{converges} if \(L(H_k) = H_k\). A \emph{converging RGNN} is an RGNN such that for every input graph, some run converges. The output on input \(G\) is
      \begin{align}
        \mathcal{C}(G) := \mathrm{Out}^\uparrow(H_k),
      \end{align}
      where \(H_0, \ldots, H_k\) is any converging run. Note that all converging runs on the same input yield the same output.
    \end{definition}

    \begin{definition}[Output-converging RGNN]\label{def:output-converging-rgnn}
      An infinite run \(H_0, H_1, \ldots\) \emph{output-converges} if there exists
      \(k \in \Nat\) such that \(\mathrm{Out}^\uparrow(H_i) = \mathrm{Out}^\uparrow(H_k)\) for all \(i \geq k\). An \emph{output-converging RGNN} is an RGNN such that for every input graph, the infinite run output-converges. The output on input \(G\) is
      \begin{align}
        \mathcal{O}(G) := \mathrm{Out}^\uparrow(H_k).
      \end{align}
    \end{definition}

    \begin{definition}[Halting RGNN]\label{def:halting-rgnn}
      A run \(H_0, \ldots, H_k\) is \emph{complete under halting classifier} \(\mathrm{Hlt} : \Real^d \to \Bool\) if \(k\) is the smallest index at which \(\mathrm{Hlt}(H_k(v)) = 1\) for every vertex \(v\). A \emph{halting RGNN} is a pair \(\mathcal{H} = (\mathcal{R}, \mathrm{Hlt})\) such that for every input graph, a complete run exists. The output on input \(G\) is
      \begin{align}
        \mathcal{H}(G) := \mathrm{Out}^\uparrow(H_k),
      \end{align}
      where \(H_0, \ldots, H_k\) is the complete run.
    \end{definition}

    A \emph{simple classifier} is a function \(c : \Real^d \to \Bool\) for which there exists a simple function \(f : \Real^d \to \Real\) such that \(c(x) = 1\) iff \(f(x) \geq 0\). We say an RGNN is \emph{simple} if its update function is a simple AC-layer and its readout function is a simple classifier. A converging RGNN is \emph{simple} if the underlying RGNN is simple. A halting RGNN is \emph{simple} if the underlying RGNN is simple and its halting classifier is a simple classifier.

    Converging RGNNs are inspired by the model of Scarselli et al.~\shortcite{Scarselli-2009-The-Graph-Neural}, who required the update function to be a \emph{contraction mapping}. By the Banach fixed-point theorem, iterating a contraction guarantees asymptotic convergence to a unique fixed point, but the fixed point need not be reached in finitely many steps. In practice, Scarselli et al.\ stop when the change between successive iterations falls below a threshold, which corresponds to a halting RGNN in our framework. Our converging RGNNs instead require exact stabilisation in finitely many steps, without imposing the contraction condition.

    Output-converging RGNNs are inspired by the model of Pflueger et al.~\shortcite{Pflueger-2024-Recurrent-Graph-Neural}, who define the output of a vertex to be the stabilised classification if one exists, and false otherwise. Their semantics assigns an output to every vertex on every graph, but their constructions do not guarantee that stabilisation occurs. We require that output-convergence is guaranteed for all inputs.

    Halting RGNNs correspond to the model introduced by Bollen et al.~\shortcite{Bollen-2025-Halting-Recurrent-GNNs}. Their notion of simplicity differs slightly from ours: they require the halting classifier to be a projection \(\mathrm{Hlt}(x) = 1\) iff \([x]_i > 0\) for some fixed index \(i\), whereas we allow any simple function with a \(\geq 0\) threshold. Since their constructions produce feature vectors where the halting and readout components are boolean, the strict inequality \([x]_i > 0\) is equivalent to \([x]_i - \tfrac{1}{2} \geq 0\), which fits our definition.

    \paragraph{Graded Bisimulation.}

    Graded bisimulation is a notion of structural equivalence for graphs that respects neighbourhood multiplicities. It characterises the distinguishing power of GNNs: two pointed graphs are indistinguishable by any GNN if and only if they are graded bisimilar (Grohe~\shortcite{Grohe-2021-The-Logic-of}).

    \begin{definition}[Graded bisimulation]\label{def:g-bisim}
      Let \(G\) and \(H\) be \(X\)-labelled graphs. A relation \(Z \subseteq V_G \times V_H\) is a \emph{graded bisimulation} between \(G\) and \(H\) if for every \((u, v) \in Z\):
      \begin{enumerate}
        \item \(G(u) = H(v)\); and
        \item there exists a bijection \(f : \nbr{G}{u} \to \nbr{H}{v}\) such that \((w, f(w)) \in Z\) for every \(w \in \nbr{G}{u}\).
      \end{enumerate}
      A graded bisimulation is \emph{total} if its domain is the entire vertex set of the first graph \(G\), and \emph{surjective} if its range is the entire vertex set of the second graph \(H\).
    \end{definition}

    The bijection requirement in condition (2) ensures that neighbours are matched one-to-one, preserving multiplicities. In ordinary bisimulation, multiple neighbours can be matched to the same neighbour, ignoring multiplicities.

    \begin{definition}[Invariance under graded-bisimulation]\label{def:g-bisim-invariance}
      A label transformer \(f : \Graphs{X} \to \Graphs{Y}\) is \emph{invariant under graded-bisimulation} if for every pair of \(X\)-labelled graphs \(G\) and \(H\) and every graded-bisimulation \(Z\) between \(G\) and \(H\), the relation \(Z\) is also a graded-bisimulation between \(f(G)\) and \(f(H)\).
    \end{definition}

    \begin{proposition}\label{prop:rgnn-invariance}
      Every vertex classifier definable by a converging RGNN or an output-converging RGNN is invariant under graded bisimulation. Every vertex classifier definable by a halting RGNN is invariant under total surjective graded bisimulations.
    \end{proposition}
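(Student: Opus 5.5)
The whole argument rests on one basic lemma: a single AC-layer preserves every graded bisimulation. Let $Z$ be a graded bisimulation between $\Real^p$-labelled graphs $G$ and $H$, and take $(u,v)\in Z$. Condition (1) gives $G(u)=H(v)$. Condition (2) gives a bijection $f:\nbr{G}{u}\to\nbr{H}{v}$ with $(w,f(w))\in Z$, and hence $G(w)=H(f(w))$ by condition (1) applied to that pair. So the neighbour multisets $\mset{G(w)\mid w\in\nbr{G}{u}}$ and $\mset{H(w')\mid w'\in\nbr{H}{v}}$ coincide. It follows that $L(G)(u)=L(H)(v)$, and condition (2) is unchanged because the graph structure is preserved. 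Hence $Z$ is a graded bisimulation between $L(G)$ and $L(H)$.

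The same holds trivially for liftings $h^\uparrow$, since $G(u)=H(v)$ implies $h(G(u))=h(H(v))$. By induction on $i$, starting from $H_0=\mathrm{In}^\uparrow(G)$, we get that $Z$ is a graded bisimulation between $H_i^G$ and $H_i^H$ for every $i$. Here $H^G_i$ and $H^H_i$ denote the $i$-th graphs of the infinite runs on $G$ and $H$. Consequently $Z$ is also a graded bisimulation between $\mathrm{Out}^\uparrow(H^G_i)$ and $\mathrm{Out}^\uparrow(H^H_i)$ for every $i$.

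\textbf{Converging RGNNs.} Choose converging runs of lengths $k_G$ and $k_H$, and set $k=\max(k_G,k_H)$. Once a run reaches a fixed point it stays there, so $H^G_k=H^G_{k_G}$ and $H^H_k=H^H_{k_H}$ are both converging runs. By the remark in Definition~\ref{def:converging-rgnn}, the outputs are $\mathrm{Out}^\uparrow(H^G_k)$ and $\mathrm{Out}^\uparrow(H^H_k)$, and these are related by $Z$ by the lemma.

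\textbf{Output-converging RGNNs.} Take $k$ to be at least both stabilisation indices. The same argument then applies.

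\textbf{Halting RGNNs.} This is the only step that needs care. The halting time is the least $k$ at which $\mathrm{Hlt}$ holds at \emph{every} vertex, so it is a global quantity. For an arbitrary graded bisimulation, $G$ may contain vertices unrelated to anything in $H$, and those vertices can delay or change the halting time. This is exactly why the statement assumes $Z$ is total and surjective. Under that assumption, the lemma gives $\mathrm{Hlt}(H^G_i(u))=\mathrm{Hlt}(H^H_i(v))$ for every $(u,v)\in Z$ and every $i$. Totality then shows that if $\mathrm{Hlt}$ holds on all of $H$ at step $i$, it holds on all of $G$; surjectivity gives the converse. So the condition "Hlt holds at all vertices at step $i$" is true for $G$ exactly when it is true for $H$, and the least such $i$ is the same for both. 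At this common index $k$, $Z$ relates $\mathrm{Out}^\uparrow(H^G_k)$ and $\mathrm{Out}^\uparrow(H^H_k)$, which proves the claim.
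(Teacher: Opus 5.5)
Your proof is correct and follows the same route as the paper. Both arguments show that AC-layers and liftings preserve graded bisimulations, carry this along the run by induction, and then use totality plus surjectivity to make the global halting index coincide. Your version fills in the details the paper delegates to Pflueger et al.\ and Bollen et al., and taking a common index $k=\max(k_G,k_H)$ makes the converging case more precise than the paper's informal remark that bisimilar vertices stabilise at the same iteration.
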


    \begin{proof}
      Every AC-layer preserves graded-bisimulations, as does every lifted function, and any composition thereof. As runs of RGNNs are precisely such compositions, a graded-bisimulation between input graphs remains one after any number of iterations.

      For converging RGNNs, this implies that graded-bisimilar vertices converge at the same iteration with equal labels: if one has stabilised, so has the other.

      The output-converging case was proven by Pflueger et al.~\shortcite{Pflueger-2024-Recurrent-Graph-Neural}.

      For halting RGNNs, the situation is more delicate. Halting depends on a global condition: all vertices must satisfy the halting classifier. Totality and surjectivity ensure that if all vertices halt in one graph, so do all vertices in the other. The full proof is given by Bollen et al.~\shortcite{Bollen-2025-Halting-Recurrent-GNNs}.
    \end{proof}

    Clearly, invariance under graded bisimulation implies invariance under
    total-surjective graded bisimulation, but the converse is not
    true. Consequently, any vertex classifier that is definable by a halting RGNN
    and that is only invariant under total surjective graded bisimulation, but not
    under  graded bisimulation, cannot equivalently be defined by a
    (output-) converging RGNN.  Therefore, in the following we necessarily compare
    converging RGNNs with halting RGNNs on the class of vertex classifiers that are
    invariant under graded bisimulation.

    \section{Main Results}\label{sec:main}

    Our main contribution is establishing the equivalence between converging RGNNs and graded-bisimulation-invariant halting RGNNs.

    \begin{theorem}\label{thm:equivalence}
      Over undirected graphs, converging RGNNs and graded-bisimulation-invariant halting RGNNs express exactly the same vertex classifiers.
    \end{theorem}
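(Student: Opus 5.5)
The theorem splits into two simulations, which I will state as Proposition~\ref{prop:converging-to-halting} (converging $\Rightarrow$ halting) and Proposition~\ref{prop:halting-to-converging} (graded-bisimulation-invariant halting $\Rightarrow$ converging). Proposition~\ref{prop:rgnn-invariance} shows that every classifier of a converging RGNN is invariant under graded bisimulation. So the halting RGNN obtained in the first direction automatically lands in the class we compare against, and the theorem follows from the two propositions.

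\emph{Converging to halting.} Given a converging RGNN $(\mathrm{In}, L, \mathrm{Out})$ of dimension $d$, I build a halting RGNN of dimension $2d+1$. Each vertex stores its current state $x$, its previous state $x'$, and a flag. The new layer computes $L$ on the $x$-components and copies the old $x$ into $x'$. The halting classifier fires iff $x = x'$, i.e.\ $\|x-x'\|_1 \le 0$, which is a simple classifier via $-\|x-x'\|_1 \ge 0$. The flag handles the first step, where $x'$ is undefined; one can instead initialise $x'$ to a value forced to differ from $x$.

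The first index at which all vertices satisfy $x = x'$ is exactly one step after the first converging run. At that point $L(H_k) = H_k$ globally, because per-vertex equality at every vertex is equality of the whole graph. The readout at that index agrees with $\mathcal{C}(G)$. This construction preserves simplicity.

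\emph{Halting to converging.} This is the main obstacle. A naive approach freezes each vertex once $\mathrm{Hlt}$ holds locally, but that is wrong: a vertex may satisfy $\mathrm{Hlt}$ at step $i$ while some other vertex does not, and the true output is read only at the first globally halting step. Detecting "everyone halts now" is impossible locally.

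My plan is to run the halting computation in synchronised rounds and let a vertex freeze only when its whole neighbourhood agrees. I will use a traffic-light protocol: each vertex carries a phase counter modulo 3 (enc: green/amber/red). A vertex advances its simulated state of $\mathcal{H}$ only when no neighbour is \BEHIND{} it, so neighbouring simulated clocks differ by at most one step. On a connected component this keeps the simulation consistent, with each vertex storing enough previous states to serve neighbours that lag.

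A vertex proposes to stop when its simulated state satisfies $\mathrm{Hlt}$ and all its neighbours are \ALIGNED{} and also halting. Otherwise an \EAGER{} vertex must keep advancing. I then need to argue the following. If the global halting step $k$ is reached on a connected component, every vertex of that component eventually sits at simulated step $k$ with all neighbours aligned at $k$, and the configuration becomes a fixed point. Conversely, no component freezes at an earlier step $j<k$, because at step $j$ some vertex fails $\mathrm{Hlt}$ and propagates "keep going" through the component.

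Two issues remain. First, distinct connected components halt at different times. Graded-bisimulation invariance ensures that the output on a component equals the output of $\mathcal{H}$ on that component alone, so per-component freezing is correct; this is where the restriction to invariant halting RGNNs is used. Second, to keep the result simple, I need to realise the protocol with ReLU and sum aggregation. This needs the mild restrictions on $\mathcal{H}$, namely boolean halting and readout bits and a boundedness or integrality condition, so that comparisons such as "some neighbour is behind" can be computed by ReLU gadgets on summed indicator counts. The hardest verification is the convergence argument for the asynchronous clocks, including the guarantee that an eventual fixed point exists, and I expect that to dominate.
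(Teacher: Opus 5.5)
Your decomposition and your converging-to-halting construction agree with the paper's. Your use of Proposition~\ref{prop:rgnn-invariance} to place the resulting halting RGNN in the graded-bisimulation-invariant class is a point the paper leaves implicit.

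The halting-to-converging half, however, is a plan rather than a proof, and the step you leave open is the one the paper's construction is built around. Under your rule, a vertex $u$ at simulated step $i$ must supply $H_i(u)$ to neighbours at step $i$ and $H_{i-1}(u)$ to neighbours at step $i-1$ using a single broadcast message. A receiver sees only an aggregate over all its neighbours. Saying that each vertex stores enough previous states does not explain how a vertex $v$ at step $j$ obtains exactly $\mathrm{AGG}(\mset{H_j(u) \mid u \in \nbr{G}{v}})$. Nor does it explain how $v$ can certify from the aggregate that every neighbour contributed a snapshot of step $j$.

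The paper resolves this by advertising the configuration of the previous \emph{converging} step, $(\kappa^m,\tau^m)$. A vertex $v$ advances only when the \emph{aligned} test holds, i.e.\ every neighbour's advertised light equals $\enc(\Phi(v,j))$. Since $|\Phi(u,j-1)-\Phi(v,j)| \le 2$, mod-$3$ lights make this test exact (Lemma~\ref{lem:predicates}). An ahead neighbour advertises its old snapshot for only one converging step. Liveness therefore also needs Lemma~\ref{lem:behind-aligned}: behind vertices are always aligned, hence catch up at once. All of this rests on the coherence invariant of Lemma~\ref{lem:coherence}.

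Your variant could instead be repaired with phase-indexed slots. A vertex at step $i$ keeps $H_i$ in slot $i \bmod 3$ and $H_{i-1}$ in slot $(i-1)\bmod 3$, and a vertex at step $j$ with no lagging neighbour reads slot $j \bmod 3$. But some such mechanism and its invariant must be stated and verified. You also still need the no-overshoot, progress and fixed-point arguments you defer, which are the content of Lemmas~\ref{lem:upper-bound}, \ref{lem:completeness} and~\ref{lem:convergence}.

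Second, your two fixed-point claims are stated for the global halting index $k$, and as stated they are false. Suppose every vertex of a component $\Gamma$ satisfies $\mathrm{Hlt}$ at some $k_\Gamma < k$. Then at step $k_\Gamma$ the only vertices failing $\mathrm{Hlt}$ lie outside $\Gamma$, and no local rule can see them. So $\Gamma$ freezes at $k_\Gamma$ and never reaches step $k$, contradicting both ``every vertex eventually sits at step $k$'' and ``no component freezes at $j<k$''. This is also inconsistent with your own later remark that components halt at different times.

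Run the argument with $k_\Gamma$ throughout: no vertex of $\Gamma$ ever exceeds $k_\Gamma$, and all eventually reach it. Correctness then requires $\mathrm{Out}(H_{k_\Gamma}(v)) = \mathcal{H}(G)(v)$. This holds because the identity on $N_\Gamma$ is a graded bisimulation between $G$ and $\Gamma$. The run of $\mathcal{H}$ on $\Gamma$ is the restriction of the run on $G$ and is complete at $k_\Gamma$, and $\mathcal{H}$ is invariant. Finally, simplicity is not part of this theorem, so your last issue can be dropped here.
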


    This theorem follows from two constructions, detailed in Sections~\ref{sec:conv-to-halt} and~\ref{sec:halt-to-conv}:

    \begin{proposition}\label{prop:converging-to-halting}
      For every converging RGNN \(\mathcal{C}\), there exists a halting RGNN \(\mathcal{H}\) such that \(\mathcal{C}(G) = \mathcal{H}(G)\) for all graphs \(G\). If \(\mathcal{C}\) is simple, then so is \(\mathcal{H}\).
    \end{proposition}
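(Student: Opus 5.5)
The idea is to let the halting RGNN detect convergence locally. Each vertex stores both its current and its previous feature vector, and halts once the two coincide. The halting condition in Definition~\ref{def:halting-rgnn} is global (all vertices simultaneously), and "all vertices have current $=$ previous" is exactly the condition \(L(H_{k-1}) = H_{k-1}\). So the first complete run of \(\mathcal{H}\) occurs one step after the first converging run of \(\mathcal{C}\).

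\textbf{Construction.} Let \(\mathcal{C} = (\mathrm{In}, L, \mathrm{Out})\) have dimension \(d\). Define \(\mathcal{H}\) of dimension \(2d+1\), writing a feature as \(x \concat y \concat b\) with \(x,y\in\Real^d\) and \(b\in\Real\).
\begin{itemize}
\item Initialisation: \(\mathrm{In}'(a) = \mathrm{In}(a) \concat \mathrm{In}(a) \concat 0\).
\item Update: the new \(x\) is \(L\) applied to the \(x\)-components. Concretely, \(\mathrm{CMB}'\) feeds \(x\) and the \(x\)-part of the aggregate into \(\mathrm{CMB}\), and uses \(\mathrm{AGG}\) restricted to the \(x\)-parts of the neighbours. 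The new \(y\) is the old \(x\), and the new \(b\) is the constant \(1\).
\item Readout: \(\mathrm{Out}'(x\concat y\concat b) = \mathrm{Out}(x)\).
\item Halting: \(\mathrm{Hlt}(x\concat y\concat b)=1\) iff \(b - 1 - \|x-y\|_1 \ge 0\).
\end{itemize}
The flag \(b\) prevents halting at step \(0\), where \(x=y\) trivially even though \(H_0\) need not be a fixed point.

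\textbf{Correctness.} By induction, the run \(H'_0,H'_1,\dots\) of \(\mathcal{H}\) satisfies \(H'_j(v) = H_j(v)\concat H_{j-1}(v)\concat 1\) for \(j\ge 1\). Hence all vertices satisfy \(\mathrm{Hlt}\) at step \(j\ge1\) iff \(H_j = H_{j-1}\), i.e.\ iff \(L(H_{j-1})=H_{j-1}\). No vertex halts at step \(0\).

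Since \(\mathcal{C}\) is converging, such a \(j\) exists, so a complete run exists; take the least such \(j\). The output is \(\mathrm{Out}^\uparrow(H_j) = \mathrm{Out}^\uparrow(H_{j-1})\). Because \(H_0,\dots,H_{j-1}\) is a converging run, this equals \(\mathcal{C}(G)\) by the remark in Definition~\ref{def:converging-rgnn} that all converging runs give the same output.

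\textbf{Simplicity.} This is the only step needing care.
\begin{itemize}
\item The sum aggregation over \(2d+1\) components contains the sum of the \(x\)-parts, so the original simple combination \(f\) can be applied after a linear projection.
\item Copying \(x\) into \(y\) is linear. It can be realised inside a ReLU network by \(z = \mathrm{ReLU}(z)-\mathrm{ReLU}(-z)\), carried alongside the layers of \(f\) so that all outputs emerge at the same depth.
\item The constant \(b'=1\) is affine.
\item \(\mathrm{Out}'\) is \(\mathrm{Out}\) precomposed with a projection, hence simple.
\item \(\mathrm{Hlt}\) is simple because \(\|x-y\|_1 = \sum_i \mathrm{ReLU}([x-y]_i)+\mathrm{ReLU}([y-x]_i)\).
\end{itemize}

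The main obstacle is purely this bookkeeping of aligning network depths. The logical core is immediate once the previous state is remembered.
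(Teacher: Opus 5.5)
Your proposal is correct and follows the paper's proof essentially step for step: store the current and previous feature vectors, halt when $\|x-y\|_1 = 0$ at every vertex, read out from the current component, and get simplicity via sum aggregation, a linear projection, and $|z| = \mathrm{ReLU}(z)+\mathrm{ReLU}(-z)$. The differences are cosmetic. The paper avoids your extra flag $b$ by initialising the second copy to $\mathrm{In}(x)+\mathbf{1}_d$, so the $L_1$ distance is already nonzero at step $0$. Your use of $z=\mathrm{ReLU}(z)-\mathrm{ReLU}(-z)$ to carry $x$ through the layers of $f$ spells out a depth-alignment detail the paper leaves implicit.
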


    \begin{proposition}\label{prop:halting-to-converging}
      For every graded-bisimulation-invariant halting RGNN \(\mathcal{H}\), there exists a converging RGNN \(\mathcal{C}\) such that \(\mathcal{H}(G) = \mathcal{C}(G)\) for all undirected graphs \(G\). Furthermore, if $\mathcal{H}$ satisfies the following conditions, then $\mathcal{C}$ is simple:
      \begin{enumerate}
        \item \(\mathcal{H}\) is simple;
        \item the simple function \(f\) defining \(\mathcal{H}\)'s halting classifier satisfies \(f(H_i(v)) \in \{-1, 1\}\) for all vertices \(v\) and iterations \(i\) of every run;
          and
        \item there exists \(B \geq 0\) such that
          \(|[H_{i+1}(v)]_j - [H_i(v)]_j| \leq B\) for all feature vector components
          \(j\), vertices \(v\), and iterations \(i\) of every run.
      \end{enumerate}
      We refer to the second condition as the \emph{range condition} and the third condition as the \emph{bounded-change condition}.
    \end{proposition}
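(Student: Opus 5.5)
The plan is to simulate $\mathcal{H}$ one connected component at a time. A local protocol lets every vertex of a component advance through the simulated iterations $H_0, H_1, \dots$ of $\mathcal{H}$, and it freezes exactly when the whole component has reached an iteration at which all its vertices satisfy $\mathrm{Hlt}$.

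\textbf{Step 1: reduce to connected components.} Let $C$ be a connected component of $G$. The identity relation on $C$ is a (non-total) graded bisimulation between the graph $C$ and $G$. Graded-bisimulation invariance of $\mathcal{H}$ (Definition~\ref{def:g-bisim-invariance}) therefore gives $\mathcal{H}(G)(v)=\mathcal{H}(C)(v)$ for $v\in C$. It thus suffices to output, on each component $C$, the labels $\mathrm{Out}(H^C_{k_C}(v))$. Here $H^C$ is the run of $\mathcal{H}$ on $C$ alone, and $k_C$ is the first index at which all of $C$ satisfies $\mathrm{Hlt}$. This matters because different components may legitimately stop at different times, so no global synchronisation is needed.

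\textbf{Step 2: the traffic-light protocol.} Each vertex $v$ stores:
\begin{itemize}
\item its current simulated state $H_i(v)$ and its previous state $H_{i-1}(v)$;
\item a phase $i \bmod 3$, encoded one-hot, which is the ``traffic light'';
\item the bit $\mathrm{Hlt}(H_i(v))$.
\end{itemize}
I will maintain the invariant that neighbouring vertices are at most one simulated step apart. From its own phase and a neighbour's phase, $v$ can tell whether that neighbour is behind, aligned, or ahead.

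A vertex $v$ at step $i$ advances to $i+1$ only if no neighbour is behind, so that every neighbour holds its step-$i$ state either as current or as previous. In addition, one of two conditions must hold: either $\mathrm{Hlt}(H_i(v))=0$, or some neighbour is already at step $i+1$ (eager propagation). When $v$ advances, it computes $H_{i+1}(v)$ by applying $\mathcal{H}$'s combination to $H_i(v)$ and to the aggregate of the neighbours' step-$i$ states. Each such state is selected as ``previous'' if the neighbour is ahead and ``current'' if it is aligned.

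I then verify three properties.
\begin{enumerate}
\item The invariant is preserved.
\item The simulated values are exactly the run $H^C$. This holds because every update uses the true step-$i$ neighbour states.
\item The process gets stuck precisely at step $k_C$. For $i<k_C$, some vertex of $C$ is unhalted, so it eventually advances, and advancement spreads through the connected component by eagerness. At $k_C$, all vertices are aligned and halted, so no rule fires and the stored vector is a fixed point.
\end{enumerate}
Reading $\mathrm{Out}$ on the current state then gives the correct output. Termination follows because each step $i<k_C$ is passed after at most $O(|C|)$ rounds. The halting RGNN guarantees that $k_C$ exists: a complete run exists, and the first global halting index $k \ge k_C$, so $C$ is all-halted at $k$.

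\textbf{Step 3: simplicity.} This is the step I expect to be the main obstacle, namely implementing the conditional selection and gating with ReLU networks and sum aggregation.
\begin{itemize}
\item The range condition makes the halting indicator an exact Boolean, obtained as $(f+1)/2$.
\item Phase comparisons over one-hot mod-3 encodings are Boolean combinations, so they are simple.
\item For selection, each neighbour sends $\mathrm{prev}$ together with a gated correction $g\cdot(\mathrm{cur}-\mathrm{prev})$. By the bounded-change condition, $|\mathrm{cur}-\mathrm{prev}|\le B$ componentwise. Hence the product of a bit $g$ with a quantity in $[-B,B]$ is computed exactly as $\mathrm{ReLU}(y+Bg-B)-\mathrm{ReLU}(-y+Bg-B)$.
\item Neighbours' phase bits are aggregated by summation. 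This gives counts of behind, aligned and ahead neighbours, and the gated states can be summed in the same way. The gated corrections only make sense when combined with phase information, so I will send, separately per phase class, the sums of $\mathrm{prev}$ and of $\mathrm{cur}-\mathrm{prev}$ gated by that class's bit.
\item The sums over $\mathrm{prev}$ themselves are unbounded, but they never need gating. Only differences bounded by $\deg(v)\cdot B$ do, and for those the same trick is applied per neighbour before aggregation.
\item Finally, the update $x\mapsto x+a\cdot(\text{new}-x)$, with an advance bit $a$, again needs only a bounded difference, since $\text{new}-x$ is a single step of $\mathcal{H}$. So it is realised exactly by the same gadget.
\end{itemize}
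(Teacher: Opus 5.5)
Your Steps 1 and 2 are sound, and your protocol differs from the paper's. You keep the previous \emph{simulated} snapshot $H_{i-1}(v)$ and block a vertex only when some neighbour is behind. The paper instead advertises the previous \emph{configuration}'s snapshot $\kappa^m$ and lets a vertex advance only when every neighbour's advertised light equals its own (\ALIGNED). That costs the paper occasional extra waiting rounds and Lemma~\ref{lem:behind-aligned}. In exchange, each neighbour's relevant snapshot always sits in one fixed component, so the receiver never has to choose, per neighbour, between ``current'' and ``previous''. In your design that choice depends on the receiver's phase $q$, which the aggregation cannot see. For arbitrary $\mathrm{AGG}'$ this is harmless: output all three candidate aggregates, one per value of $q$, padding the configuration as needed. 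So the general part goes through.

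The gap is in Step 3. Each sender gates $\mathrm{cur}-\mathrm{prev}$ by its \emph{own} phase; these values must be stored in its configuration, since AC-layers have no message function. The receiver then sees $P=\sum_u\mathrm{prev}_u$ and class sums $S_0,S_1,S_2$, and must feed $P+S_q$ into $\mathcal{H}$'s $\mathrm{CMB}$. Selecting $S_q$ by the receiver's phase bits multiplies a bit by a quantity bounded only by $\deg(v)\cdot B$. Degrees are unbounded, and every simple function is globally Lipschitz, so no fixed ReLU network computes $(g,y)\mapsto g\cdot y$ exactly for $g\in\{0,1\}$ and unbounded $y$. Saying the trick is ``applied per neighbour before aggregation'' does not help, because the gate you need is the receiver's phase, not the sender's.

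There is a second, independent problem. Your gadget $\mathrm{ReLU}(y+Bg-B)-\mathrm{ReLU}(-y+Bg-B)$ returns $0$ at $g=0$ only when $|y|\le B$. In the final update this fails exactly when $v$ is blocked by a behind neighbour. The candidate is then $\mathrm{CMB}$ applied to an aggregate mixing snapshots from different steps. The bounded-change condition says nothing about that candidate, so $x$ is corrupted even though $a=0$.

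Both problems are repaired by reversing which gate value must be exact and deferring selection until after $\mathrm{CMB}$.
\begin{enumerate}
\item Use a gadget that vanishes identically at gate $0$, such as $\varphi$ in~\eqref{eq:relu-cond}, which equals $\max(-s',\min(\delta,s'))$.
\item Compute all three candidates $\mathrm{new}_p=\mathrm{CMB}(x,P+S_p)$; each is a simple function of the input.
\item Output $x+\sum_{p}\varphi(\mathrm{new}_p-x,\;B\cdot\mathrm{ReLU}(a+t_p-1))$.
\end{enumerate}
Only the candidate with $p=q$ can pass, and only when $v$ advances. That candidate is a genuine step of $\mathcal{H}$, so $|\mathrm{new}_q-x|\le B$ componentwise. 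Alternatively, adopt the paper's advertise-and-align design, which removes the selection altogether.
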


    Bollen et al.~\shortcite{Bollen-2025-Halting-Recurrent-GNNs} showed that every μGML formula is expressible by a graded-bisimulation-invariant simple halting RGNN whose halting classifier satisfies the range condition of Proposition~\ref{prop:halting-to-converging}. Their constructions have per-component changes in \(\{-1, 0, 1\}\), so the bounded-change condition is satisfied with \(B = 1\). Combined with Proposition~\ref{prop:halting-to-converging}, we obtain:

    \begin{corollary}\label{cor:mu-gml}
      Every vertex classifier definable in μGML is expressible by a simple converging RGNN, and hence also by a simple output-converging RGNN.
    \end{corollary}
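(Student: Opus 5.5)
The corollary is obtained by composing results already stated; no new construction is needed. Fix a μGML formula \(\varphi\) over the finite label set \(X\), and let \(f_\varphi\) be the vertex classifier it defines. By the result of Bollen et al.~\shortcite{Bollen-2025-Halting-Recurrent-GNNs} quoted above, there is a simple halting RGNN \(\mathcal{H}\) with \(\mathcal{H}(G) = f_\varphi(G)\) for all \(G \in \Graphs{X}\). This \(\mathcal{H}\) is graded-bisimulation-invariant, and its halting classifier satisfies the range condition. Moreover, every feature component changes by an element of \(\{-1,0,1\}\) per iteration, so the bounded-change condition holds with \(B = 1\).

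The first step is to check that the hypotheses of Proposition~\ref{prop:halting-to-converging} are met on exactly the graphs considered there. Since all graphs are assumed undirected throughout, the "for all undirected graphs" clause covers every \(G \in \Graphs{X}\). Two points need care.

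\begin{enumerate}
\item \emph{Form of the halting classifier.} Their halting classifier is a projection \([x]_i > 0\), whereas our definition uses a simple function with a \(\geq 0\) threshold. Following the remark after Definition~\ref{def:halting-rgnn}, I would take the simple function \(f(x) = 2[x]_i - 1\). Because their halting component is boolean on every run, \(f(H_i(v)) \in \{-1,1\}\), so the range condition holds literally in our formalisation. Since \(f\) is affine, the classifier remains simple.
\item \emph{Graded-bisimulation invariance.} This holds because \(\mathcal{H}\) computes a μGML-definable classifier, and μGML formulas are invariant under graded bisimulation.
\end{enumerate}

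The second step applies Proposition~\ref{prop:halting-to-converging} to \(\mathcal{H}\). It yields a converging RGNN \(\mathcal{C}\) with \(\mathcal{C}(G) = \mathcal{H}(G) = f_\varphi(G)\) for every graph. Since conditions (1)–(3) hold, \(\mathcal{C}\) is simple.

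For the final claim, I would show that \(\mathcal{C}\), viewed as an output-converging RGNN, has the same output. Take a converging run \(H_0,\dots,H_k\) with \(L(H_k) = H_k\). By induction, \(H_i = H_k\) for all \(i \geq k\), so \(\mathrm{Out}^\uparrow(H_i) = \mathrm{Out}^\uparrow(H_k)\) for all \(i \geq k\). Hence the infinite run output-converges and \(\mathcal{O}(G) = \mathcal{C}(G)\). Simplicity carries over, since it refers only to the underlying RGNN.

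The only potential obstacle is the bookkeeping in the first point: confirming that the cited construction's invariants (booleanness of the halting component and \(\pm1\) changes) hold on every run and every graph, rather than only on complete runs. The range and bounded-change conditions quantify over all iterations \(i\), not just those up to halting. I would verify this directly from the construction of Bollen et al.; if some components were unbounded in their change after halting, I would modify \(\mathcal{H}\) to freeze once halted, which preserves its output.
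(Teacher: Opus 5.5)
Your proposal is correct and follows the paper's argument. You take Bollen et al.'s simple, graded-bisimulation-invariant halting RGNN for the formula, check the range condition and the bounded-change condition with \(B=1\), apply Proposition~\ref{prop:halting-to-converging}, and note that a converging run is in particular output-converging with the same output. Your added details are sound: rescaling to \(2[x]_i-1\) so the range is literally \(\{-1,1\}\), and deriving invariance from μGML itself. I would drop the fallback of freezing \(\mathcal{H}\) once halted. It is not locally implementable, since no vertex can detect the global halting condition. It is also unnecessary, because the proof of Lemma~\ref{lem:simplicity} only uses the two conditions on snapshots up to index \(k\).
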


    For the converse direction, converging RGNNs and output-converging RGNNs are both invariant under graded bisimulation as per Proposition~\ref{prop:rgnn-invariance}. Colcombet et al.~\shortcite{Colcombet-2025-Tree-Algebras-and} recently proved the finitary Janin-Walukiewicz theorem~\cite{Janin-2001-Relating-Levels-of,Walukiewicz-2002-Monadic-second-order}: over finite graphs, bisimulation-invariant MSO formulas are expressible in the μ-calculus. Whether this result extends to graded bisimulation is, to our knowledge, open; if it does, every MSO-definable vertex classifier expressible by a converging or output-converging RGNN is also expressible in μGML.

    \section{From Converging to Halting}\label{sec:conv-to-halt}

    We begin with the simpler direction: every converging RGNN can be simulated by a halting RGNN.

    \begin{proof}[Proof of Proposition~\ref{prop:converging-to-halting}]
      Let \(\mathcal{C} = (\mathrm{In}, L, \mathrm{Out})\) be a converging RGNN of dimension \(d\), and let \(H_0, H_1, \ldots\) denote the infinite run of \(\mathcal{C}\) on an input graph \(G\). Since \(\mathcal{C}\) converges, there exists some \(k \in \Nat\) such that \(H_{k-1} = H_k\).

      The idea is to construct a halting RGNN \(\mathcal{H}\) whose run \(H'_0, H'_1, \ldots\) satisfies \(H'_i(v) = H_i(v) \concat H_{i-1}(v)\) for all \(i \geq 1\). By storing both the current and previous feature vectors, each vertex can locally detect whether its feature has stabilised. When \(H_{k-1} = H_k\), all vertices will simultaneously detect stabilisation, triggering the system to halt.

      We construct \(\mathcal{H} = ((\mathrm{In}', L', \mathrm{Out}'), \mathrm{Hlt})\) of dimension \(2d\) as follows:
      \begin{itemize}
        \item \(\mathrm{In}'(x) := \mathrm{In}(x) \concat (\mathrm{In}(x) + \mathbf{1}_d)\), where \(\mathbf{1}_d \in \Real^d\) is the all-ones vector. This ensures the two components are initially distinct, so no vertex signals halting before the first iteration.
        \item \(L'\) is an AC-layer that, given a feature vector \(y \concat y' \in \Real^{2d}\), computes \(L(y) \concat y\).
        \item \(\mathrm{Hlt}(y \concat y') := 1\) iff \(h(y \concat y') \geq 0\), where \(h(y \concat y') := -\|y - y'\|_1\).
        \item \(\mathrm{Out}'(y \concat y') := \mathrm{Out}(y)\).
      \end{itemize}

      We verify that \(L'\) is indeed an AC-layer. Since \(L\) has aggregation function \(\mathrm{AGG}\) and combination function \(\mathrm{CMB}\), we define:
      \begin{align}
        \mathrm{AGG}'(M) &:= \mathrm{AGG}(\pi_1(M)) \concat \mathbf{0}_d, \\
        \mathrm{CMB}'((y \concat y'), (a \concat a')) &:= \mathrm{CMB}(y, a) \concat y,
      \end{align}
      where \(\pi_1(M)\) denotes the multiset obtained by projecting each element of \(M\) onto its first \(d\) components.

      By construction, the run \(H'_0, H'_1, \ldots\) of \(\mathcal{H}\) on \(G\) satisfies \(H'_i(v) = H_i(v) \concat H_{i-1}(v)\) for all \(i \geq 1\) and all vertices \(v\). At iteration \(k\), every vertex \(v\) has \(H'_k(v) = H_k(v) \concat H_{k-1}(v)\), and since \(H_{k-1} = H_k\), we have \(h(H'_k(v)) = -\|H_k(v) - H_{k-1}(v)\|_1 = 0 \geq 0\), so all vertices satisfy the halting condition. Thus \(\mathcal{H}\) halts, and the outputs agree:
      \begin{align}
        \mathcal{H}(G) = \mathrm{Out}'^\uparrow(H'_k) = \mathrm{Out}^\uparrow(H_k) = \mathcal{C}(G).
      \end{align}

      Now suppose \(\mathcal{C}\) is simple. The construction above does not preserve simplicity, since \(\mathrm{AGG}'\) involves projection rather than plain summation. We give an alternative construction of \(L'\) that preserves simplicity. Since \(\mathrm{AGG}\) is summation over \(\Real^d\), we define \(\mathrm{AGG}'\) to be summation over \(\Real^{2d}\). Since summation distributes over concatenation, the first \(d\) components of the result equal \(\mathrm{AGG}(\pi_1(M))\); the remaining components are ignored by \(\mathrm{CMB}'\). We define
      \begin{align}
        \mathrm{CMB}'((y \concat y'), (a \concat a')) := f(y \concat a) \concat y,
      \end{align}
      where \(f\) is the simple function such that \(\mathrm{CMB}(y, a) = f(y \concat a)\). This layer computes the same function as the general construction. Now \(\mathrm{AGG}'\) is summation, hence simple, and \(\mathrm{CMB}'\) is simple since it selects \(y\) and \(a\) from the input via an affine transformation, applies \(f\), and concatenates \(y\). The function \(h(y \concat y') = -\|y - y'\|_1 = -\sum_{i=1}^d |[y]_i - [y']_i|\) is simple, since \(|z| = \mathrm{ReLU}(z) + \mathrm{ReLU}(-z)\). If \(\mathrm{Out}\) is a simple classifier, then so is \(\mathrm{Out}'\).
    \end{proof}

    \section{From Halting to Converging}\label{sec:halt-to-conv}

    The converse direction is more challenging. Given a graded-bisimulation-invariant halting RGNN \(\mathcal{H}\), we must construct a converging RGNN \(\mathcal{C}\) that computes the same classifier.

    Throughout this section, let \(\mathcal{H}\) be a halting RGNN with complete run \(H_0, \ldots, H_k\) on input graph \(G\), where \(k\) is the index at which the run completes, and let \(C_0, C_1, \ldots\) be the infinite run of a converging RGNN \(\mathcal{C}\) on the same input. We refer to each graph \(H_i\) in the halting run as a \emph{halting state}, and each graph \(C_j\) in the converging run as a \emph{converging state}. A vertex's feature vector in a halting state, \(H_i(v)\), is called a \emph{snapshot}, and a vertex's feature vector in a converging state, \(C_j(v)\), is called a \emph{configuration}.

    \subsection{The Synchronisation Problem}\label{sec:sync-problem}

    The difficulty lies in coordination. In a halting RGNN, vertices can continue computing indefinitely until all vertices satisfy the halting classifier, at which point the entire system halts all at once. In a converging RGNN, vertices cannot simply keep computing while waiting for the system to halt: the act of continuing the computation prevents the feature vectors from stabilising, which is precisely the condition required for convergence.

    A natural strategy is to have each vertex step through \(\mathcal{H}\)'s halting states one at a time. A vertex pauses at \(H_i\) once it locally satisfies the halting classifier, and resumes when it discovers that a neighbour is already simulating \(H_{i+1}\). This causes vertices to desynchronise: different vertices may be simulating different halting states at the same time. The difficulty is that to advance, a vertex needs its neighbours' snapshots from the halting state it is currently simulating, but a neighbour that has already advanced no longer holds that snapshot.

    We solve this by having each vertex advertise its previous snapshot when it advances, so that neighbours who stayed behind can catch up. However, neighbours that advanced simultaneously receive this previous snapshot too, even though they no longer need it. Each vertex therefore signals both which halting state it is currently simulating and which halting state its advertised snapshot represents. This enables two things:
    \begin{itemize}
      \item A vertex can detect when a neighbour has moved ahead, triggering it to catch up.
      \item A vertex can determine whether a neighbour's advertised snapshot is the one it needs or an outdated one.
    \end{itemize}
    To make this precise, we first introduce a formal notion of which halting state each vertex is simulating.

    \subsection{Correspondence}\label{sec:correspondence}

    To relate the converging run to the halting run, we introduce a \emph{correspondence}: a function that maps, for each converging state, each vertex to a halting state.

    \begin{definition}[Correspondence]\label{def:correspondence}
      A \emph{correspondence} between these runs is a function \(\Phi : V \times \Nat \to \{0, \ldots, k\}\) satisfying, for all vertices \(v\) and all \(j \geq 0\):
      \begin{align}
        \Phi(v, j+1) \in \{\Phi(v, j), \Phi(v, j) + 1\}.
      \end{align}
      Given a converging state \(C_j\), we say that vertex \(v\) is simulating halting state \(H_{\Phi(v,j)}\).
    \end{definition}

    The monotonicity condition ensures that vertices only move forward through the halting run, advancing by at most one halting state per converging step.

    Given a correspondence \(\Phi\), we say vertex \(v\) is \emph{behind} at converging state \(C_j\) if there exists a neighbour \(u\) with \(\Phi(u,j) > \Phi(v,j)\). A vertex is \emph{eager} at \(C_j\) if it is behind or if \(\mathrm{Hlt}(H_{\Phi(v,j)}(v)) = 0\). A vertex is \emph{aligned} at \(C_j\) when it receives from each neighbour the information about \(H_{\Phi(v,j)}\) needed to advance. We will construct \(\mathcal{C}\) so that a vertex advances precisely when it is both eager and aligned. Lemma~\ref{lem:predicates} will show that these properties can be tested locally.

    \subsection{Configurations}\label{sec:config}

    Assume that \(\mathcal{H}\) has dimension \(d\). Each configuration \(C_j(v) \in \Real^{2d+6}\) has the form:
    \begin{align}
      C_j(v) &= \left(\kappa^c_j(v),\, \tau^c_j(v),\, \kappa^m_j(v),\, \tau^m_j(v)\right), \label{eq:config}
    \end{align}
    where \(\kappa^c_j(v), \kappa^m_j(v) \in \Real^d\) are the \emph{current} and \emph{advertised snapshot} respectively, and \(\tau^c_j(v), \tau^m_j(v) \in \Real^3\) are the \emph{current} and \emph{advertised traffic light}, defined next.

    The advertised components store information transmitted to neighbours; the traffic lights, defined next, encode which halting state a vertex is simulating, allowing neighbours to determine their relative positions in the halting run.

    \begin{definition}[Traffic light]
      A \emph{traffic light} is an element of
      \begin{align}
        \trafficlights := \{(1,0,0), (0,1,0), (0,0,1)\}.
      \end{align}
      The encoding \(\enc : \Nat \to \trafficlights\) maps \(i\) to \((1,0,0)\) if \(i \equiv 0 \pmod 3\), to \((0,1,0)\) if \(i \equiv 1 \pmod 3\), and to \((0,0,1)\) if \(i \equiv 2 \pmod 3\). The successor and predecessor operations are \(\adv(g, y, r) := (r, g, y)\) and \(\ret(g, y, r) := (y, r, g)\), so that \(\enc(i+1) = \adv(\enc(i))\) and \(\enc(i-1) = \ret(\enc(i))\) for \(i \geq 1\).
    \end{definition}

    \subsection{Coherence}\label{sec:coherence}

    A correspondence is \emph{coherent} when the configurations genuinely reflect the halting states being simulated.

    \begin{definition}[Coherence]\label{def:coherence}
      A correspondence \(\Phi\) is \emph{coherent} if for every converging state \(C_j\) and every vertex \(v\):
      \begin{enumerate}
        \item\label{coh:1} \(\kappa^c_j(v) = H_{\Phi(v,j)}(v)\);
        \item\label{coh:2} \(\tau^c_j(v) = \enc(\Phi(v, j))\);
        \item\label{coh:3} if \(j \geq 1\), then \(\kappa^m_j(v) = \kappa^c_{j-1}(v)\) and \(\tau^m_j(v) = \tau^c_{j-1}(v)\); and
        \item\label{coh:4} for every neighbour \(u\) of \(v\), \(|\Phi(v, j) - \Phi(u, j)| \leq 1\).
      \end{enumerate}
    \end{definition}

    We say \(\Phi\) is \emph{coherent until step \(n\)} if Coherence~\ref{coh:1}--\ref{coh:4} hold for all \(j \leq n\).

    Coherence items~\ref{coh:1} and~\ref{coh:2} require that each vertex's current snapshot and traffic light faithfully reflect the halting state being simulated. Coherence~\ref{coh:3} requires the advertised components to equal the previous current components, and Coherence~\ref{coh:4} bounds the desynchronisation between neighbours to at most one halting state.

    The choice of encoding \(\Phi(v,j)\) as a traffic light may seem odd, but storing it as a natural number would not suffice due to the aggregation mechanism of GNNs. In a simple GNN, each vertex receives only the sum of its neighbours' messages, not individual values. If vertices sent their \(\Phi\)-values directly, the sum would reveal only the total, making it impossible to determine whether any particular neighbour is behind. The same holds for most aggregation functions, such as mean and max.

    \subsection{Construction}\label{sec:construction}

    We construct \(\mathcal{C} = (\mathrm{In}', L', \mathrm{Out}')\) as follows. The initialisation function sets both snapshots to the initial halting state and both traffic lights to \(\enc(0)\):
    \begin{align}
      \mathrm{In}'(x) &\coloneqq \left(\mathrm{In}(x),\, \enc(0),\, \mathrm{In}(x),\, \enc(0)\right).
    \end{align}
    The update function \(L'\) is an AC-layer with aggregation function \(\mathrm{AGG}'\) and combination function \(\mathrm{CMB}'\). The aggregation function \(\mathrm{AGG}'\) applies \(\mathcal{H}\)'s aggregation to each snapshot component independently and summation to each traffic light component. If \(\mathcal{H}\) is simple, this reduces to summation over the full configuration.

    Since Coherence~\ref{coh:4} guarantees that neighbours differ by at most one halting state, comparing traffic lights determines their relative positions. Writing \(\odot\) for componentwise product, we define the following predicates on two configurations \(\kappa = (\kappa^c, \tau^c, \kappa^m, \tau^m)\) and  \(\hat{\kappa} = (\hat{\kappa^c}, \hat{\tau^c}, \hat{\kappa^m}, \hat{\tau^m})\):

    \begin{align}
      \label{eq:behind}
      \BEHIND(\kappa, \hat{\kappa}) &\coloneqq \|\adv(\tau^c) \odot \hat{\tau^c}\|_1 > 0, \\
      \label{eq:aligned}
      \ALIGNED(\kappa, \hat{\kappa}) &\coloneqq \|\tau^c \odot \hat{\tau^m}\|_1 = \|\hat{\tau^m}\|_1, \\
      \EAGER(\kappa, \hat{\kappa}) &\coloneqq \BEHIND(\kappa, \hat{\kappa}) \lor \neg\mathrm{Hlt}(\kappa^c).
    \end{align}
    When \(\kappa\) is the configuration of a vertex in some state \(C_j\) and \(\hat{\kappa}\) is the aggregated configurations from its neighbours, these predicates detect whether a vertex is behind, aligned, and eager, using only the traffic lights and the halting classifier. In particular, every vertex that is behind is eager.

    The combination function \(\mathrm{CMB}'\) distinguishes two cases: a vertex is \emph{advancing} if \(\ALIGNED(\kappa, \hat{\kappa}) \land \EAGER(\kappa, \hat{\kappa})\), and \emph{waiting} otherwise.

    Writing \(\kappa' \coloneqq \mathrm{CMB}(\kappa^c, \hat{\kappa^m})\) for the next snapshot and \(\tau' \coloneqq \adv(\tau^c)\) for the next traffic light:
    \begin{subnumcases}{\label{eq:comb-cases}\mathrm{CMB}'(\kappa, \hat{\kappa}) \coloneqq}
      (\kappa',\, \tau',\, \kappa^c,\, \tau^c) & \text{if advancing,} \label{eq:advancing} \\
      (\kappa^c,\, \tau^c,\, \kappa^c,\, \tau^c) & \text{if waiting.} \label{eq:waiting}
    \end{subnumcases}
    Note that both cases set \(\tau^m\) to the current traffic light \(\tau^c\).

    Finally, the readout function is:
    \begin{align}
      \mathrm{Out}'(\kappa) &\coloneqq \mathrm{Out}(\kappa^c). \label{eq:out}
    \end{align}
    If the correspondence is coherent and \(\Phi(v,j) = k\), then \(\kappa^c = H_k(v)\), so \(\mathrm{Out}'\) always returns the same value as \(\mathrm{Out}\) at the corresponding halting step: \(\mathrm{Out}'(\mathcal{C}_j(v)) = \mathrm{Out}(H_k(v))\). In particular, once \(\Phi(v,j) = k\) for all \(v\), this holds simultaneously.

    In the following lemma we argue formally that \(\mathcal{C}\) is simple if \(\mathcal{H}\) satisfies the three conditions of Proposition~\ref{prop:halting-to-converging}. Intuitively, the range condition is needed because \(\mathcal{C}\) applies Boolean-valued logic to the result of \(\mathcal{H}\)'s halting classifier \(\mathrm{Hlt}\). If the simple function defining \(\mathrm{Hlt}\) is arbitrary, then we cannot do this required boolean logic using simple functions only. By contrast, if it always produces values in \(\{-1, 1\}\), then these can easily be transformed to the range \(\{0, 1\}\), which allows us to do boolean logic using simple functions.

    The bounded-change condition is needed because conditionally applying a simple function, i.e., multiplying a branch by a \(\{0, 1\}\) indicator of the same input, is not itself simple. Instead, we implement the case distinction using a ReLU identity that requires the per-component changes to be bounded.

    \begin{lemma}\label{lem:simplicity}
      If \(\mathcal{H}\) is simple, the simple function \(f\) defining \(\mathrm{Hlt}\) has range \(\{-1, 1\}\) on all feature vectors encountered during runs of \(\mathcal{H}\), and there exists \(B \geq 0\) such that \(|[H_{i+1}(v)]_j - [H_i(v)]_j| \leq B\) for all components \(j\), vertices \(v\), and iterations \(i\) of every run, then \(\mathcal{C}\) is also simple.
    \end{lemma}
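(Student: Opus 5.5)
The plan is to check the three components of \(\mathcal{C}\) in turn. The initialisation \(\mathrm{In}'\) is unconstrained by simplicity. The aggregation \(\mathrm{AGG}'\) is plain summation over \(\Real^{2d+6}\) when \(\mathcal{H}\) is simple, as already noted in the construction. The readout \(\mathrm{Out}'(\kappa)=\mathrm{Out}(\kappa^c)\) is a simple classifier, since it is \(\mathrm{Out}\) precomposed with an affine projection. All the work is therefore in writing \(\mathrm{CMB}'\) as a single ReLU network in \(\kappa\concat\hat\kappa\).

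Such a network only needs to agree with \eqref{eq:comb-cases} on the inputs that actually occur in runs of \(\mathcal{C}\). For this I will rely on the coherence invariant established in the correctness argument that follows, namely that the correspondence is coherent at every step. Under coherence, \(\kappa^c\) is a genuine snapshot \(H_i(v)\), so the range condition applies to it. Also \(\tau^c\) and \(\tau^m\) are one-hot vectors, so \(\hat{\tau^c}\) and \(\hat{\tau^m}\) are vectors of nonnegative integers counting neighbours per colour.

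\textbf{Step 1: the predicates as \(\{0,1\}\)-valued simple functions.} The building block is \(\min(x,y)=x-\mathrm{ReLU}(x-y)\), which is simple.
\begin{itemize}
\item \(\BEHIND\): for \(e\in\{0,1\}\) and an integer \(n\geq 0\) we have \(\min(e,n)=e\cdot[n>0]\). Hence \(\beta:=\sum_{i}\min([\adv(\tau^c)]_i,[\hat{\tau^c}]_i)\in\{0,1\}\) is exactly the indicator of \(\BEHIND\), and \(\adv\) is a coordinate permutation.
\item \(\ALIGNED\): the condition fails iff some neighbour advertises a colour different from \(\tau^c\). So I set \(\alpha_{\mathrm{al}}:=1-\min\bigl(1,\sum_i\min(1-[\tau^c]_i,[\hat{\tau^m}]_i)\bigr)\).
\item The halting bit: by the range condition, \(\eta:=(f(\kappa^c)+1)/2\in\{0,1\}\) is affine in the simple \(f\).
\item \(\EAGER\) and advancing: \(\EAGER\) is \(\max(\beta,1-\eta)\), written as \(x+y-\min(x,y)\). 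Then \(\alpha:=\min(\alpha_{\mathrm{al}},\EAGER)\in\{0,1\}\) is the advancing indicator.
\end{itemize}

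\textbf{Step 2: gating the update, which I expect to be the main obstacle.} Writing \(\mathrm{CMB}(y,a)=g(y\concat a)\) with \(g\) simple, the new current snapshot is \(\kappa^c+\alpha\cdot z\) with \(z:=g(\kappa^c\concat\hat{\kappa^m})-\kappa^c\). The product \(\alpha\cdot z\) is not simple in general.

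The approach is first to clip componentwise, \(\mathrm{clip}_B(z)=\mathrm{ReLU}(z+B)-\mathrm{ReLU}(z-B)-B\). Then use the identity, valid for \(|w|\le B\) and \(\alpha\in\{0,1\}\),
\[
\alpha w=\mathrm{ReLU}\bigl(w-B(1-\alpha)\bigr)-\mathrm{ReLU}\bigl(-w-B(1-\alpha)\bigr).
\]
Clipping is harmless in the cases that matter. When \(\alpha=1\), the vertex is aligned and coherent, so \(\hat{\kappa^m}\) is the correct aggregated neighbour information about \(H_{\Phi(v,j)}\). Then \(z=H_{\Phi+1}(v)-H_\Phi(v)\), and the bounded-change condition gives \(|z|\le B\). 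When \(\alpha=0\), \(z\) may be garbage from misaligned neighbours, but after clipping the gate returns \(0\) regardless.

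The traffic light is handled the same way: \(\tau^c+\alpha(\adv(\tau^c)-\tau^c)\), with differences in \([-1,1]\) and bound \(1\). The advertised outputs \(\kappa^m\leftarrow\kappa^c\) and \(\tau^m\leftarrow\tau^c\) are identical in both cases, so they are just affine copies.

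\textbf{Step 3: assembly.} Concatenating these pieces gives one feedforward ReLU network. Parallel branches can be padded to equal depth with identity layers \(x=\mathrm{ReLU}(x)-\mathrm{ReLU}(-x)\). On every reachable input this network agrees with \eqref{eq:comb-cases}, so \(L'\) is a simple AC-layer.

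The one delicate point is that Step 2 uses aligned-implies-correct-snapshot and coherence. I will state these as the invariant proved in the subsequent correctness lemmas (Lemma~\ref{lem:predicates} and the coherence induction). That argument only uses the case semantics \eqref{eq:comb-cases}, not the particular implementation, so there is no circularity.
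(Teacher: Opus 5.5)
Your proof is correct and follows the paper's approach. Both build $\{0,1\}$ indicators for the predicates via $\min(x,1)=x-\mathrm{ReLU}(x-1)$ on integer traffic-light counts, rescale $f$ affinely for the halting bit, and gate the per-component change with a ReLU identity that is exact by the bounded-change condition, needing agreement with \eqref{eq:comb-cases} only on coherent inputs.

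The differences are minor. You clip before gating because your identity needs $|w|\le B$ even when $\alpha=0$; the paper's $\varphi(\delta,0)=0$ holds for every $\delta$, so it needs no clipping. Your $\min(e,n)$ formulation of $\BEHIND$ and $\ALIGNED$ also never forms the componentwise product $\odot$, which is not a simple function when neighbour counts are unbounded. The paper's norm-based phrasing glosses over that detail.
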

    \begin{proof}
      Since \(\mathcal{H}\) is simple, \(\mathrm{AGG}'\) is summation and hence simple. Writing \(|z| \coloneqq \mathrm{ReLU}(z) + \mathrm{ReLU}(-z)\), the norm \(\|x\|_1 = \sum_i |[x]_i|\) is simple. During runs, the norms in \(\BEHIND\) and \(\ALIGNED\) evaluate to non-negative integers, and \(\min(x, 1) = x - \mathrm{ReLU}(x - 1)\) maps them to \(\{0, 1\}\) indicators, returning $0$ if the comparisons in \eqref{eq:behind} and \eqref{eq:aligned} return False, and $1$ otherwise. Since the simple function \(f\) defining $\mathrm{Hlt}$ has range \(\{-1, 1\}\), the simple function \((1 - f)/2\) gives a \(\{0, 1\}\) indicator for \(\neg\mathrm{Hlt}\), so \(\EAGER\) is also simple.

      It remains to show that \(\mathrm{CMB}'\) is simple. Let \(a\) and \(e\) denote the \(\{0, 1\}\)-valued \ALIGNED\ and \EAGER\ predicates above, and let \(s \coloneqq \mathrm{ReLU}(a + e - 1)\), so that \(s = 1\) when advancing and \(s = 0\) when waiting.

      Both branches of~\eqref{eq:comb-cases} set the advertised components to \((\kappa^c, \tau^c)\), so these require no conditional logic. For the current-snapshot components, define the per-component delta \(\delta_i \coloneqq [\mathrm{CMB}(\kappa^c, \hat{\kappa^m})]_i - [\kappa^c]_i\) and the function
      \begin{align}
        \varphi(\delta, s') \coloneqq{} & \mathrm{ReLU}(\delta + s') - \mathrm{ReLU}(\delta) \nonumber \\
        & {} - \mathrm{ReLU}(-\delta + s') + \mathrm{ReLU}(-\delta). \label{eq:relu-cond}
      \end{align}
      It is straightforward to verify that \(\varphi(\delta, 0) = 0\) and \(\varphi(\delta, s') = \delta\) whenever \(s' \geq |\delta|\). The bounded-change condition ensures \(|\delta_i| \leq B\) on coherent inputs, so setting \(s' \coloneqq B \cdot s\) makes \([\kappa^c]_i + \varphi(\delta_i,\, B \cdot s)\) equal \([\kappa^c]_i\) when waiting (\(s = 0\)) and \([\mathrm{CMB}(\kappa^c, \hat{\kappa^m})]_i\) when advancing (\(s = 1\)). The traffic-light components are handled identically with \(B = 1\). All operations are compositions of affine transformations and \(\mathrm{ReLU}\), so \(\mathrm{CMB}'\) is simple; note it agrees with~\eqref{eq:comb-cases} only on coherent inputs. Finally, \(\mathrm{Out}'\) projects onto \(\kappa^c\) and applies \(\mathrm{Out}\), so it is also simple.
    \end{proof}

    \subsection{Correctness}\label{sec:correctness}

    It remains to show that \(\mathcal{C}\) is a converging RGNN computing the same classifier as \(\mathcal{H}\). To this end, throughout the section, fix input graph $G$, the complete run \(H_0, \ldots, H_k\) of $\mathcal{H}$ on  \(G\), and the infinite run \(C_0, C_1, \ldots\)  of \(\mathcal{C}\) on  $G$.

    The correctness argument relies on four key lemmas, which we first state before turning to their proof. The first lemma is Coherence, which provides an invariance relating \(\mathcal{C}\)'s infinite run to \(\mathcal{H}\)'s halting run.

    \begin{lemma}[Coherence]\label{lem:coherence}
      There exists a coherent correspondence \(\Phi\) between the halting run of \(\mathcal{H}\) and the converging run of \(\mathcal{C}\).
    \end{lemma}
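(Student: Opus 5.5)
We take \(\Phi\) to be the correspondence induced by the advancing/waiting case distinction of \(\mathrm{CMB}'\): set \(\Phi(v,0) \coloneqq 0\), and let \(\Phi(v,j+1) \coloneqq \Phi(v,j)+1\) if \(v\) is advancing at \(C_j\) (its configuration and aggregated neighbourhood satisfy \(\ALIGNED \land \EAGER\)), and \(\Phi(v,j+1) \coloneqq \Phi(v,j)\) otherwise. The monotonicity condition of Definition~\ref{def:correspondence} then holds by definition. We prove by induction on \(n\) that \(\Phi\) is coherent until step \(n\) and that \(\Phi(v,n) \leq k\) for all \(v\). The base case is immediate from \(\mathrm{In}'\). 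We carry along an auxiliary invariant:
\begin{align}
  \Phi(u,j-1) \geq \Phi(v,j)-1 \quad \text{for all neighbours } u \text{ of } v \text{ and all } j \geq 1.
\end{align}
It holds at \(j=1\) trivially. To keep it, consider a step \(j-1 \to j\). If \(v\) waited, it follows from Coherence~\ref{coh:4} at \(j-1\). If \(v\) advanced, alignment at \(j-1\) (interpreted as below) gives \(\Phi(u,j-2) = \Phi(v,j-1)\) for all neighbours \(u\), hence \(\Phi(u,j-1) \geq \Phi(v,j-1) = \Phi(v,j)-1\).

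The core step is to show that, under the induction hypothesis at \(C_j\), the local predicates compute their intended meaning. By Coherence~\ref{coh:2} and~\ref{coh:4}, each neighbour's current light is \(\enc(\Phi(v,j)+\epsilon)\) with \(\epsilon \in \{-1,0,1\}\). These three lights are distinct, so \(\|\adv(\tau^c)\odot\sum_u \tau^c(u)\|_1\) counts the neighbours with \(\Phi(u,j)=\Phi(v,j)+1\). Thus \(\BEHIND\) holds iff \(v\) is behind, and \(\EAGER\) holds iff \(v\) is eager.

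For \(\ALIGNED\), Coherence~\ref{coh:3} (or the initialisation if \(j=0\)) gives \(\tau^m_j(u) = \enc(\Phi(u,j-1))\) and \(\kappa^m_j(u) = H_{\Phi(u,j-1)}(u)\). We have \(\Phi(u,j-1) \leq \Phi(u,j) \leq \Phi(v,j)+1\), and by the auxiliary invariant \(\Phi(u,j-1) \geq \Phi(v,j)-1\). So again at most three consecutive values occur, and their encodings are distinct. Each advertised light has norm \(1\), so \(\|\sum_u\tau^m(u)\|_1\) equals the degree of \(v\), while \(\|\tau^c\odot\sum_u\tau^m(u)\|_1\) counts the neighbours with \(\Phi(u,j-1)=\Phi(v,j)\). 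Hence \(\ALIGNED\) holds iff every neighbour advertises \(H_{\Phi(v,j)}(u)\). \textbf{This step, getting the window of possible advertised indices down to three, is the main obstacle.} Naively the window has four values, and \(\Phi(v,j)-2 \equiv \Phi(v,j)+1 \pmod 3\) would collide; this is exactly why the auxiliary invariant above is needed.

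With this interpretation, the inductive step is as follows.
\begin{itemize}
  \item \textbf{Advancing vertex.} The aggregated advertised snapshots equal \(\mathrm{AGG}\) of \(\mset{H_{\Phi(v,j)}(u)}\), so \(\kappa^c_{j+1}(v) = H_{\Phi(v,j)+1}(v)\). The light becomes \(\adv(\enc(\Phi(v,j))) = \enc(\Phi(v,j)+1)\), and the advertised components become the previous current ones. This gives Coherence~\ref{coh:1}--\ref{coh:3}.
  \item \textbf{Waiting vertex.} The current components are unchanged and the advertised components are set to them, so Coherence~\ref{coh:1}--\ref{coh:3} again hold.
  \item \textbf{Coherence~\ref{coh:4}.} The difference between neighbours \(u\) and \(v\) can only grow past \(1\) if one of them, say \(v\), advances while \(\Phi(v,j) \geq \Phi(u,j)\). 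But alignment of \(v\) gives \(\Phi(u,j-1) = \Phi(v,j)\), hence \(\Phi(u,j) \geq \Phi(v,j)\). So \(\Phi(u,j)=\Phi(v,j)\), and the difference after the step is at most \(1\).
  \item \textbf{Bound \(\Phi \leq k\).} Suppose \(\Phi(v,j)=k\). No neighbour is ahead, because all values are at most \(k\), so \(v\) is not behind. Also \(\mathrm{Hlt}(H_k(v))=1\) since the run is complete. Hence \(v\) is not eager and waits.
\end{itemize}
For simple \(\mathcal{C}\), Lemma~\ref{lem:simplicity} notes that \(\mathrm{CMB}'\) agrees with~\eqref{eq:comb-cases} on coherent inputs. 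The induction only ever evaluates \(\mathrm{CMB}'\) on configurations that are coherent by the hypothesis, so the same argument applies there too.
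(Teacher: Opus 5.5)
Your proof is correct and follows the paper's own argument. You use the same inductively defined \(\Phi\), the same reading of \(\BEHIND\), \(\ALIGNED\), \(\EAGER\) under coherence up to step \(j\), and the same alignment-based argument for Coherence~\ref{coh:4}. Your explicit check that \(\Phi\) never exceeds \(k\) also settles a point the paper only recovers afterwards via Lemma~\ref{lem:upper-bound}.

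What you call the main obstacle is not one, however. Your auxiliary invariant is true but unnecessary. \(\ALIGNED\) only tests whether \(\enc(\Phi(u,j-1)) = \enc(\Phi(v,j))\). In the naive window \(\Phi(u,j-1)-\Phi(v,j) \in \{-2,-1,0,1\}\), only \(0\) is divisible by \(3\), so the collision of \(-2\) with \(+1\) is harmless: both are correctly read as \emph{not aligned}. This is exactly the paper's \(|\Phi(u,j-1)-\Phi(v,j)| \leq 2 < 3\) argument.
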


    For the other three lemmas, fix a connected component \(\Gamma\) of \(G\) with vertex set \(N_\Gamma\), and let \(k_\Gamma\) be the first index at which \(\mathrm{Hlt}(H_{k_\Gamma}(v)) = 1\) for all \(v \in N_\Gamma\). By definition of a complete run, \(k_\Gamma \leq k\). Note \(k_\Gamma < k\) is possible when all vertices with \(\mathrm{Hlt}(H_{k_\Gamma}(v)) = 0\) lie outside \(N_\Gamma\). The remaining three key lemmas, called Completeness, Convergence, and Correctness, intuitively show that \(\mathcal{C}\)'s configurations on \(N_\Gamma\) converge to their state at \(H_{k_\Gamma}\), and that for all \(v \in N_\Gamma\), \(\mathrm{Out}(H_{k_\Gamma}(v)) = \mathrm{Out}(H_k(v))\) . To obtain the latter equality invariance of $\mathcal{H}$ under  graded-bisimulation is crucial: it implies that the computation of $\mathcal{H}$ at any vertex depends only on its connected component, so vertices from different components cannot influence each other.

    In the remaining lemmas, let \(\Phi\) denote the coherent correspondence given by Lemma~\ref{lem:coherence}.

    \begin{lemma}[Completeness]\label{lem:completeness}
      There exists a \(j' \in \Nat\) such that \(\Phi(v, j') = k_\Gamma\) for all \(v \in N_\Gamma\).
    \end{lemma}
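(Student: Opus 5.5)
The plan is to prove a stronger, phase-by-phase statement by induction on \(m = 0, 1, \ldots, k_\Gamma\). Call a converging state \(C_j\) \emph{synchronised at \(m\) on \(\Gamma\)} if \(\Phi(v,j) = m\) for all \(v \in N_\Gamma\) and, in addition, every \(v \in N_\Gamma\) has \(\tau^m_j(v) = \enc(m)\), i.e.\ its advertised traffic light equals its current one. The claim is that for every \(m \leq k_\Gamma\) some \(C_j\) is synchronised at \(m\) on \(\Gamma\); the case \(m = k_\Gamma\) gives Lemma~\ref{lem:completeness}.

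Since \(\Gamma\) is a connected component, each vertex's aggregate depends only on vertices in \(N_\Gamma\), so we may reason about \(\Gamma\) in isolation. Throughout we use Lemma~\ref{lem:coherence}. By Coherence~\ref{coh:2}, \ref{coh:3} and~\ref{coh:4}, a vertex \(v\) with \(\Phi(v,j) = m\) evaluates the predicates as follows. It is \(\BEHIND\) iff some neighbour has \(\Phi = m+1\). It is \(\ALIGNED\) iff every neighbour \(u\) advertises \(\enc(m)\), i.e.\ \(\Phi(u, j-1) = m\). This uses that \(\Phi\) values in a window of width at most \(3\) are distinguished by \(\enc\).

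\emph{Base case.} \(C_0\) is synchronised at \(0\) by the definition of \(\mathrm{In}'\).

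\emph{Inductive step.} Suppose \(C_j\) is synchronised at \(m < k_\Gamma\). By minimality of \(k_\Gamma\), some \(w \in N_\Gamma\) has \(\mathrm{Hlt}(H_m(w)) = 0\). This \(w\) is eager, and it is aligned because all neighbours advertise \(\enc(m)\), so it advances at step \(j\). Call the set of vertices advancing at step \(j+t\) the \(t\)-th wave. I would prove the following invariant for \(t \geq 0\) by induction on \(t\).
\begin{enumerate}
\item Every vertex has \(\Phi \in \{m, m+1\}\).
\item No vertex at \(m+1\) advances, since it has a neighbour still at \(m\) advertising \(\enc(m)\), or, once none remain, no reason to be eager other than \(\mathrm{Hlt}=0\), handled below.
\item Every vertex at \(m\) adjacent to an \((m+1)\)-vertex has all of its \((m+1)\)-neighbours in the immediately preceding wave.
\end{enumerate}
Given item 3, such a vertex is behind, hence eager, and aligned: neighbours still at \(m\) waited and advertise \(\enc(m)\), while the freshly advanced ones advertise their previous light \(\enc(m)\). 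Hence it advances in the next wave. So the set at \(m+1\) grows by the full neighbourhood frontier every step, like a BFS, and by connectedness all of \(N_\Gamma\) reaches \(m+1\) within \(\mathrm{diam}(\Gamma)+1\) steps.

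One further subtlety concerns vertices at \(m+1\) while some neighbour is still at \(m\). They must not advance again, which would break Coherence~\ref{coh:4}. This holds because such a neighbour advertises \(\enc(m)\), so they are not aligned. Once all vertices are at \(m+1\), one more step later the last wave has also waited once, so every advertised light is \(\enc(m+1)\). That state is synchronised at \(m+1\).

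\emph{Main obstacle.} The hard part is making item 3 of the invariant precise. A vertex at \(m\) adjacent to an \((m+1)\)-vertex that advanced two or more steps ago would see \(\enc(m+1)\) advertised and be permanently misaligned, which is a potential deadlock. The invariant rules this out, because any \(m\)-vertex adjacent to a wave-\(t\) vertex was behind at step \(j+t+1\) and, by the induction, aligned, so it joined wave \(t+1\). I would first check this carefully at \(t=0\), where \(m\)-vertices adjacent to wave \(0\) see only stayed neighbours and wave-\(0\) neighbours, both advertising \(\enc(m)\). Then I would lift it inductively, noting that vertices at \(m\) not adjacent to any \((m+1)\)-vertex simply wait or advance as eager starters and remain consistent with the invariant.
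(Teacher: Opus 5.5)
There is a genuine gap. The statement you induct on, that for every $m \le k_\Gamma$ some state is synchronised at $m$ on $\Gamma$, is false, because items 1 and 2 of your wave invariant fail. A vertex at $m+1$ is blocked only while some neighbour still advertises $\enc(m)$. Once all of its neighbours have been at $m+1$ for one step, it is aligned. If $\mathrm{Hlt}(H_{m+1}(v)) = 0$, it then advances to $m+2$ while distant vertices of $\Gamma$ are still at $m$. The case you defer with ``handled below'' is never handled.

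Concretely, take the path $a$--$b$--$c$ with $\mathrm{Hlt}(H_0(a)) = \mathrm{Hlt}(H_1(a)) = 0$ and all other halting values $1$, so $k_\Gamma = 2$. For $j = 0,\dots,4$ the index triples $(\Phi(a,j),\Phi(b,j),\Phi(c,j))$ are $(0,0,0)$, $(1,0,0)$, $(1,1,0)$, $(1,1,1)$, $(2,1,1)$. At $C_3$, vertex $c$ still advertises $\enc(0)$. Meanwhile $a$ is already aligned, since its only neighbour $b$ advertises $\enc(1)$, and it is eager. So no state is ever synchronised at $1$. On the path $a$--$b$--$c$--$d$ with the same halting values, no state even has all indices equal to $1$. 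Your closing step (``one more step later \ldots\ every advertised light is $\enc(m+1)$'') fails for the same reason, so the induction on $m$ has no hypothesis to pass on.

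What is missing is a progress measure that tolerates this asynchrony, rather than phase synchronisation. The paper argues as follows. If $m = \min_{v \in N_\Gamma}\Phi(v,j) < k_\Gamma$, then some vertex of $N_\Gamma$ advances at $C_j$ or $C_{j+1}$. There are two cases:
\begin{itemize}
\item Some minimum-index vertex has a neighbour at $m+1$. It is then behind, hence aligned by Lemma~\ref{lem:behind-aligned}, and advances at once.
\item All vertices are at $m$. Then a vertex with $\mathrm{Hlt}(H_m(v)) = 0$ is aligned at $C_j$ or at $C_{j+1}$, and advances.
\end{itemize}
Combine this with Lemma~\ref{lem:upper-bound} ($\Phi(v,j) \le k_\Gamma$ on $N_\Gamma$) and with monotonicity of $\Phi$. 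The sum $\sum_{v \in N_\Gamma}\Phi(v,j)$ is bounded and increases at least every two steps, so it must reach $\card{N_\Gamma}\cdot k_\Gamma$.

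Your item 3 is correct and worth keeping: an $m$-vertex adjacent to an $(m+1)$-vertex is behind and aligned and advances immediately, which rules out the deadlock you worried about. It is essentially Lemma~\ref{lem:behind-aligned}, and it can be reused inside such a minimum-index argument. You would then also need the upper bound $\Phi \le k_\Gamma$, which your proposal never invokes.
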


    \begin{lemma}[Convergence]\label{lem:convergence}
      For all \(j \geq j' + 1\) and all \(v \in N_\Gamma\), \(\mathcal{C}_{j+1}(v) = \mathcal{C}_j(v)\).
    \end{lemma}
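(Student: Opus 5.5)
The plan is to show by induction on \(j \geq j'\) that every vertex of \(N_\Gamma\) is \emph{waiting} at \(C_j\). Waiting freezes the current components and copies them into the advertised components, so from step \(j'+1\) onward the configurations are constant. The one-step offset in the statement is explained by the advertised components at \(C_{j'}\): they may still hold the snapshot from before the last advance, so they need one waiting step to be overwritten.

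\emph{Induction hypothesis.} For \(j \geq j'\), \(\Phi(v,j) = k_\Gamma\) for all \(v \in N_\Gamma\). The base case is Lemma~\ref{lem:completeness}.

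\emph{Inductive step.} Fix \(v \in N_\Gamma\). Since \(\Gamma\) is a connected component, every neighbour \(u\) of \(v\) lies in \(N_\Gamma\), so \(\Phi(u,j) = k_\Gamma = \Phi(v,j)\). Hence \(v\) is not behind.

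I would check this directly on the predicates rather than only cite Lemma~\ref{lem:predicates}. By Coherence~\ref{coh:2}, every neighbour has \(\tau^c_j(u) = \enc(k_\Gamma)\). So the aggregated current traffic light is \(\hat{\tau^c} = m\cdot\enc(k_\Gamma)\), where \(m\) is the degree of \(v\) (possibly \(0\)). Because \(\adv(\enc(k_\Gamma)) = \enc(k_\Gamma+1)\) and \(\enc(k_\Gamma)\) have disjoint supports, \(\|\adv(\tau^c)\odot\hat{\tau^c}\|_1 = 0\), and \(\BEHIND\) is false.

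By Coherence~\ref{coh:1}, \(\kappa^c_j(v) = H_{k_\Gamma}(v)\). By the definition of \(k_\Gamma\), \(\mathrm{Hlt}(H_{k_\Gamma}(v)) = 1\). Therefore \(\EAGER\) is false, \(v\) is not advancing, and it is waiting regardless of \(\ALIGNED\). Since no vertex of \(N_\Gamma\) advances, \(\Phi(v,j+1) = \Phi(v,j) = k_\Gamma\) by monotonicity of the correspondence, which closes the induction.

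\emph{Conclusion.} Waiting at every step \(j \geq j'\) gives, by the waiting case of \eqref{eq:comb-cases},
\[
C_{j+1}(v) = \bigl(\kappa^c_j(v),\,\tau^c_j(v),\,\kappa^c_j(v),\,\tau^c_j(v)\bigr) = \bigl(H_{k_\Gamma}(v),\,\enc(k_\Gamma),\,H_{k_\Gamma}(v),\,\enc(k_\Gamma)\bigr).
\]
The right-hand side does not depend on \(j\). Hence \(C_{j+1}(v) = C_j(v)\) for all \(j \geq j'+1\).

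The step needing the most care is the claim that \(v\) waits. It requires the case distinction of \(\mathrm{CMB}'\) to behave as in \eqref{eq:comb-cases}. In the simple implementation (Lemma~\ref{lem:simplicity}) this holds only on coherent inputs, which Lemma~\ref{lem:coherence} provides. In that setting \(s = 0\), so \(\varphi(\delta_i, 0) = 0\) and the current components are indeed unchanged.
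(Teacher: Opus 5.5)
Your proof is correct and follows essentially the same route as the paper. No vertex of \(N_\Gamma\) is behind and \(\mathrm{Hlt}(H_{k_\Gamma}(v))\) holds, so no vertex is eager and every vertex waits; the waiting branch then freezes the current components and copies them into the advertised ones, which makes the configuration constant from step \(j'+1\).

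The only difference is how you keep \(\Phi\) at \(k_\Gamma\): you use an induction, whereas the paper combines Lemma~\ref{lem:upper-bound} with monotonicity. Your inductive step actually relies on the definition of \(\Phi\) in the proof of Lemma~\ref{lem:coherence} (it increments exactly when \(v\) advances), not on monotonicity alone.
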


    \begin{lemma}[Correctness]\label{lem:output}
      For all \(j \geq j'\) and all \(v \in N_\Gamma\), \(\mathrm{Out}'(\mathcal{C}_j(v)) = \mathcal{H}(G)(v)\).
    \end{lemma}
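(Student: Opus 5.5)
The proof has two parts. First, $\mathrm{Out}'(\mathcal{C}_j(v))$ equals $\mathrm{Out}(H_{k_\Gamma}(v))$ for every $j \geq j'$. Second, $\mathrm{Out}(H_{k_\Gamma}(v)) = \mathcal{H}(G)(v)$. The first part uses the coherent correspondence $\Phi$ together with Lemmas~\ref{lem:completeness} and~\ref{lem:convergence}. The second part is where graded-bisimulation invariance of $\mathcal{H}$ is needed.

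\emph{Step 1: the configuration at $j'$.} By Lemma~\ref{lem:completeness}, $\Phi(v,j') = k_\Gamma$ for all $v \in N_\Gamma$. Coherence~\ref{coh:1} then gives $\kappa^c_{j'}(v) = H_{k_\Gamma}(v)$. By~\eqref{eq:out}, $\mathrm{Out}'(\mathcal{C}_{j'}(v)) = \mathrm{Out}(H_{k_\Gamma}(v))$.

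\emph{Step 2: the steps after $j'$.} For $j \geq j'+1$, Lemma~\ref{lem:convergence} gives $\mathcal{C}_j(v) = \mathcal{C}_{j'+1}(v)$. So it suffices to show that $\kappa^c_{j'+1}(v) = \kappa^c_{j'}(v)$.

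We check that every $v \in N_\Gamma$ is waiting at $j'$. All neighbours of $v$ lie in $N_\Gamma$ and have $\Phi = k_\Gamma$, so $v$ is not behind. Since $\mathrm{Hlt}(H_{k_\Gamma}(v)) = 1$ by the choice of $k_\Gamma$, $v$ is also not eager. By the correctness of the local predicates (the forthcoming Lemma~\ref{lem:predicates}), $\EAGER$ therefore evaluates to false. Case~\eqref{eq:waiting} then applies and leaves $\kappa^c$ unchanged.

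Alternatively, Step~2 can be argued directly from $\Phi$. Since $\Phi \leq k$ and no vertex advances past a non-eager state, $\Phi(v,j'+1) = k_\Gamma$, and Coherence~\ref{coh:1} again yields $\kappa^c_{j'+1}(v) = H_{k_\Gamma}(v)$.

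\emph{Step 3: relating $H_{k_\Gamma}$ to $H_k$.} This is the main obstacle. Let $G_\Gamma$ be the induced subgraph of $G$ on $N_\Gamma$, and let $Z = \{(u,u) \mid u \in N_\Gamma\}$. Because $\Gamma$ is a connected component, $\nbr{G_\Gamma}{u} = \nbr{G}{u}$ for every $u \in N_\Gamma$. Hence $Z$ is a graded bisimulation between $G_\Gamma$ and $G$, with the identity serving as the required bijection.

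Next, since every AC-layer and every lifted function preserve graded bisimulations (as in the proof of Proposition~\ref{prop:rgnn-invariance}), the infinite run $H^\Gamma_0, H^\Gamma_1,\ldots$ of $\mathcal{H}$ on $G_\Gamma$ satisfies $H^\Gamma_i(u) = H_i(u)$ for all $u \in N_\Gamma$ and all $i$. The complete run on $G_\Gamma$ therefore stops at the first index where $\mathrm{Hlt}$ holds on all of $N_\Gamma$, which is exactly $k_\Gamma$. This gives $\mathcal{H}(G_\Gamma)(v) = \mathrm{Out}(H_{k_\Gamma}(v))$.

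Finally, $\mathcal{H}$ is invariant under graded bisimulation (Definition~\ref{def:g-bisim-invariance}), so $Z$ is also a graded bisimulation between $\mathcal{H}(G_\Gamma)$ and $\mathcal{H}(G)$. Condition~1 of Definition~\ref{def:g-bisim} then gives $\mathcal{H}(G_\Gamma)(v) = \mathcal{H}(G)(v) = \mathrm{Out}(H_k(v))$. Combining this with Steps~1 and~2 proves the lemma.
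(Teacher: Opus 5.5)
Your proposal is correct and follows essentially the same route as the paper. Coherence~\ref{coh:1}, Lemmas~\ref{lem:completeness} and~\ref{lem:convergence}, and the observation that no vertex in $N_\Gamma$ is behind or eager at $C_{j'}$ pin $\kappa^c_j(v)$ to $H_{k_\Gamma}(v)$ for all $j \geq j'$; then the identity relation on $N_\Gamma$, as a graded bisimulation between the component and $G$, together with the invariance of $\mathcal{H}$, gives $\mathrm{Out}(H_{k_\Gamma}(v)) = \mathcal{H}(G)(v)$.
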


    We next prove Lemmas~\ref{lem:coherence}--\ref{lem:output}.  We begin with a lemma confirming that, given a coherent correspondence the predicates \(\BEHIND\), \(\ALIGNED\), and \(\EAGER\) coincide with the definitions of \emph{behind}, \emph{aligned}, and \emph{eager} from Section~\ref{sec:correspondence}.

    \begin{lemma}\label{lem:predicates}
      Write \(\BEHIND_j(v)\), \(\ALIGNED_j(v)\), and \(\EAGER_j(v)\) for the predicates \(\BEHIND\), \(\ALIGNED\), and \(\EAGER\) evaluated at vertex \(v\) in state \(C_j\). Let \(\Phi\) be a correspondence coherent until step \(j\). Then:
      \begin{enumerate}
        \item\label{pred:behind} \(\BEHIND_j(v)\) holds if and only if some neighbour \(u\) of \(v\) satisfies \(\Phi(u,j) > \Phi(v,j)\). In that case, \(\Phi(u,j) = \Phi(v,j) + 1\).
        \item\label{pred:aligned} \(\ALIGNED_j(v)\) holds if and only if \(\kappa^m_j(u) = H_{\Phi(v,j)}(u)\) and \(\tau^m_j(u) = \enc(\Phi(v,j))\) for every neighbour \(u\) of \(v\).
        \item\label{pred:eager} \(\EAGER_j(v)\) holds if and only if \(v\) is behind or \(\mathrm{Hlt}(H_{\Phi(v,j)}(v)) = 0\).
      \end{enumerate}
    \end{lemma}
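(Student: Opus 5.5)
The plan is to use that aggregation of traffic lights is summation. So for a vertex \(v\) in state \(C_j\), the aggregated current light is \(\hat{\tau^c} = \sum_{u \in \nbr{G}{v}} \tau^c_j(u)\), and the aggregated advertised light is \(\hat{\tau^m} = \sum_{u} \tau^m_j(u)\). Each summand is a standard basis vector of \(\Real^3\), so for any traffic light \(t\) the quantity \(\|t \odot \hat{\tau}\|_1\) counts the neighbours whose light equals \(t\), and \(\|\hat{\tau^m}\|_1\) is the degree of \(v\). Every item then reduces to determining \(\Phi\)-values modulo \(3\). We use coherence to confine those values to a window of consecutive integers narrow enough that residues mod \(3\) determine them uniquely.

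For item~\ref{pred:behind}, Coherence~\ref{coh:2} gives \(\tau^c_j(u) = \enc(\Phi(u,j))\), and \(\adv(\tau^c_j(v)) = \enc(\Phi(v,j)+1)\). Hence \(\BEHIND_j(v)\) holds iff some neighbour \(u\) has \(\Phi(u,j) \equiv \Phi(v,j)+1 \pmod 3\). By Coherence~\ref{coh:4}, \(\Phi(u,j) \in \{\Phi(v,j)-1, \Phi(v,j), \Phi(v,j)+1\}\). These three values are pairwise incongruent mod \(3\), so the congruence holds iff \(\Phi(u,j) = \Phi(v,j)+1\). This gives both the equivalence and the "in that case" clause.

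For item~\ref{pred:aligned}, the counting observation shows that \(\ALIGNED_j(v)\) holds iff every neighbour \(u\) has \(\tau^m_j(u) = \enc(\Phi(v,j))\), using Coherence~\ref{coh:2} for \(\tau^c_j(v)\). It remains to show that this light condition forces \(\kappa^m_j(u) = H_{\Phi(v,j)}(u)\).
\begin{itemize}
\item For \(j \geq 1\), Coherence~\ref{coh:3} gives \(\tau^m_j(u) = \enc(\Phi(u,j-1))\) and \(\kappa^m_j(u) = \kappa^c_{j-1}(u) = H_{\Phi(u,j-1)}(u)\) by Coherence~\ref{coh:1}.
\item For \(j = 0\), the initialisation sets the advertised components equal to the current ones, so the same formulas hold with \(\Phi(u,j-1)\) replaced by \(\Phi(u,0)\). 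The argument below then goes through verbatim.
\end{itemize}
By monotonicity of \(\Phi\) and Coherence~\ref{coh:4}, \(\Phi(u,j-1) \in \{\Phi(u,j)-1, \Phi(u,j)\} \subseteq \{\Phi(v,j)-2, \ldots, \Phi(v,j)+1\}\). In this window of four consecutive integers, only \(\Phi(v,j)\) itself is congruent to \(\Phi(v,j)\) mod \(3\); the endpoints \(\Phi(v,j)-2\) and \(\Phi(v,j)+1\) share a residue, but that residue differs from \(\Phi(v,j)\)'s. Hence \(\tau^m_j(u) = \enc(\Phi(v,j))\) iff \(\Phi(u,j-1) = \Phi(v,j)\), and in that case \(\kappa^m_j(u) = H_{\Phi(v,j)}(u)\). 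Conversely, if both conditions of item~\ref{pred:aligned} hold for all neighbours, the light condition holds for all neighbours, so the norms are equal.

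Item~\ref{pred:eager} is immediate. \(\EAGER_j(v)\) is \(\BEHIND_j(v) \lor \neg\mathrm{Hlt}(\kappa^c_j(v))\); item~\ref{pred:behind} identifies the first disjunct with \(v\) being behind, and Coherence~\ref{coh:1} gives \(\kappa^c_j(v) = H_{\Phi(v,j)}(v)\). The step I expect to need the most care is the window argument in item~\ref{pred:aligned}. Its width of four is exactly what mod-\(3\) encoding can tolerate, because the dangerous residue collision (\(\Phi(v,j)-2\) versus \(\Phi(v,j)+1\)) never involves \(\Phi(v,j)\) itself.
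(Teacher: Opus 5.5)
Your proof is correct and follows essentially the same route as the paper. For item 1 you use one-hot traffic lights together with Coherence items 2 and 4; for item 2 you observe that \(\Phi(u,j-1)\) lies within distance 2 of \(\Phi(v,j)\), so the mod-3 encoding pins it to \(\Phi(v,j)\). Your uniform handling of \(j=0\) via \(\kappa^m_0=\kappa^c_0\) and \(\tau^m_0=\tau^c_0\) is a small improvement over the paper, because it does not tacitly assume \(\Phi(\cdot,0)=0\).
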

    \begin{proof}
      We prove the three statements in order.
      \begin{enumerate}
        \item By Coherence~\ref{coh:4}, every neighbour \(u\) has \(\Phi(u,j) \in \{\Phi(v,j) - 1,\, \Phi(v,j),\, \Phi(v,j) + 1\}\). By Coherence~\ref{coh:2}, \(\tau^c_j(u) = \enc(\Phi(u,j))\). Since \(\enc(i+1) = \adv(\enc(i))\) and \(\enc(i-1) = \ret(\enc(i))\), the three possible traffic lights are \(\ret(\tau^c_j(v))\), \(\tau^c_j(v)\), and \(\adv(\tau^c_j(v))\). These are three distinct one-hot vectors, so each is uniquely identifiable in the componentwise sum \(\hat{\tau^c_j}(v)\), and \(\|\adv(\tau^c_j(v)) \odot \hat{\tau^c_j}(v)\|_1 > 0\) precisely when some neighbour has \(\Phi(u,j) = \Phi(v,j) + 1\).

        \item By Coherence~\ref{coh:2}, \(\tau^c_j(v) = \enc(\Phi(v,j))\). Since traffic lights are one-hot, \(\ALIGNED_j(v)\) holds precisely when \(\tau^m_j(u) = \enc(\Phi(v,j))\) for every neighbour \(u\). At \(j = 0\), the definition of \(\mathrm{In}'\) gives \(\kappa^m_0(u) = H_0(u)\) and \(\tau^m_0(u) = \enc(0)\), so the claim is immediate. For \(j \geq 1\), by Coherence~\ref{coh:3}, \(\kappa^m_j(u) = \kappa^c_{j-1}(u)\) and \(\tau^m_j(u) = \tau^c_{j-1}(u)\), so by Coherence~\ref{coh:1} and~\ref{coh:2}, \(\kappa^m_j(u) = H_{\Phi(u,j-1)}(u)\) and \(\tau^m_j(u) = \enc(\Phi(u,j-1))\). Since \(|\Phi(u,j-1) - \Phi(u,j)| \leq 1\) and \(|\Phi(u,j) - \Phi(v,j)| \leq 1\) by Coherence~\ref{coh:4}, we have \(|\Phi(u,j-1) - \Phi(v,j)| \leq 2\), so \(\tau^m_j(u) = \enc(\Phi(v,j))\) forces \(\Phi(u,j-1) = \Phi(v,j)\) and hence \(\kappa^m_j(u) = H_{\Phi(v,j)}(u)\). Since the traffic-light condition already implies the snapshot condition, both conditions hold if and only if \(\ALIGNED_j(v)\) holds.

        \item By Coherence~\ref{coh:1}, \(\kappa^c_j(v) = H_{\Phi(v,j)}(v)\), so \(\neg\mathrm{Hlt}(\kappa^c_j(v))\) holds if and only if \(\mathrm{Hlt}(H_{\Phi(v,j)}(v)) = 0\). The claim follows from the definition of \(\EAGER\) and item~\ref{pred:behind}. \qedhere
      \end{enumerate}
    \end{proof}

    To prove the coherence lemma we establish a helper lemma: when a vertex is aligned, applying \(\mathrm{CMB}\) correctly produces the next snapshot.

    \begin{lemma}\label{lem:aligned-messages}
      Let \(\Phi\) be a correspondence coherent until step \(j\). If \(v\) is aligned at \(C_j\), then \(\mathrm{CMB}(\kappa^c_j(v), \hat{\kappa^m_j}(v)) = H_{\Phi(v,j)+1}(v)\).
    \end{lemma}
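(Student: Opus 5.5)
The plan is to unfold the definition of $\hat{\kappa^m_j}(v)$ and reduce the claim to the defining equation of the halting run, $H_{i+1} = L(H_i)$. Write $i \coloneqq \Phi(v,j)$.

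First, Lemma~\ref{lem:predicates}(\ref{pred:aligned}) applies, because $\Phi$ is coherent until step $j$. Since $v$ is aligned at $C_j$, it gives $\kappa^m_j(u) = H_i(u)$ for every neighbour $u \in \nbr{G}{v}$. Coherence~\ref{coh:1} gives $\kappa^c_j(v) = H_i(v)$.

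Second, recall how $\mathrm{AGG}'$ acts on the advertised-snapshot block. It applies $\mathcal{H}$'s aggregation $\mathrm{AGG}$ to each snapshot component independently. So the advertised-snapshot part of the aggregated configuration is
\[
\hat{\kappa^m_j}(v) = \mathrm{AGG}\bigl(\mset{\kappa^m_j(u) \mid u \in \nbr{G}{v}}\bigr) = \mathrm{AGG}\bigl(\mset{H_i(u) \mid u \in \nbr{G}{v}}\bigr).
\]
The multiset equality holds because the two multisets agree elementwise under the identity indexing by neighbours. This is the only point needing care: in the simple case $\mathrm{AGG}'$ is plain summation over the whole configuration, and summation distributes over concatenation, so the $\kappa^m$-block of the sum is exactly the sum of the $\kappa^m$-blocks. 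That sum is $\mathrm{AGG}$ applied to them. In the general case this holds by the definition of $\mathrm{AGG}'$ as blockwise application.

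Third, substitute into Definition~\ref{def:ac-layer}:
\[
\mathrm{CMB}\bigl(\kappa^c_j(v), \hat{\kappa^m_j}(v)\bigr) = \mathrm{CMB}\bigl(H_i(v), \mathrm{AGG}(\mset{H_i(u) \mid u \in \nbr{G}{v}})\bigr) = L(H_i)(v) = H_{i+1}(v).
\]
The last equality is Definition~\ref{def:run}.

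One subtlety remains: $H_{i+1}$ must exist within the run, i.e.\ $i+1 \le k$, or else be read as the next element of the infinite run. Since the statement refers only to the infinite run's graph $H_{\Phi(v,j)+1}$, the identity holds regardless. I expect no real obstacle beyond the bookkeeping of the second step, namely checking that the aggregation in $\mathcal{C}$ really isolates the neighbours' advertised snapshots.
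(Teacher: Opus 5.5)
Your proposal is correct and matches the paper's proof essentially step for step. You use Lemma~\ref{lem:predicates}(\ref{pred:aligned}) for the neighbours' advertised snapshots and Coherence~\ref{coh:1} for $\kappa^c_j(v)$, then substitute into the AC-layer and the run definition to get $H_{\Phi(v,j)+1}(v)$; your remarks on how $\mathrm{AGG}'$ isolates the $\kappa^m$-block and on $H_{k+1}$ being defined by the infinite run are harmless details the paper leaves implicit.
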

    \begin{proof}
      Since \(v\) is aligned at \(C_j\), Lemma~\ref{lem:predicates} gives \(\kappa^m_j(u) = H_{\Phi(v,j)}(u)\) for every neighbour \(u\). Substituting into the definition of \(\hat{\kappa^m_j}(v)\) gives
      \begin{align}
        \hat{\kappa^m_j}(v) &= \mathrm{AGG}\left(\mset{H_{\Phi(v,j)}(u) \mid u \in \nbr{G}{v}}\right).
      \end{align}
      By Coherence~\ref{coh:1}, \(\kappa^c_j(v) = H_{\Phi(v,j)}(v)\). Substituting both equalities into the definition of \(\mathcal{H}\)'s layer \(L\) gives
      \begin{align}
        \mathrm{CMB}(\kappa^c_j(v), \hat{\kappa^m_j}(v))
        &= L(H_{\Phi(v,j)})(v) \\
        &= H_{\Phi(v,j)+1}(v),
      \end{align}
      where the last equality is by definition of the halting run.
    \end{proof}

    \begin{proof}[Proof of Lemma~\ref{lem:coherence}]
      We proceed by induction on \(j\). Define \(\Phi(v, 0) \coloneqq 0\) for all \(v\). By the definition of \(\mathrm{In}'\), \(C_0(v) = (H_0(v),\, \enc(0),\, H_0(v),\, \enc(0))\), so Coherence~\ref{coh:1}--\ref{coh:4} hold at \(j = 0\).

      For the inductive step, suppose \(\Phi\) is coherent until step \(j\) and set \(\Phi(v, j+1) \coloneqq \Phi(v,j) + 1\) if \(v\) advances at \(C_j\), and \(\Phi(v, j+1) \coloneqq \Phi(v,j)\) otherwise. We verify Coherence~\ref{coh:1}--\ref{coh:4} at \(C_{j+1}\). For Coherence~\ref{coh:1} and~\ref{coh:2} we distinguish two cases.

      \begin{enumerate}
        \item If \(v\) advances~\eqref{eq:advancing}, then \(v\) is aligned at \(C_j\) and \(\Phi(v, j+1) = \Phi(v,j) + 1\). By Lemma~\ref{lem:aligned-messages} and Coherence~\ref{coh:2} at \(C_j\),
          \begin{align}
            \kappa^c_{j+1}(v) &= \mathrm{CMB}(\kappa^c_j(v), \hat{\kappa^m_j}(v)) = H_{\Phi(v,j)+1}(v), \\
            \tau^c_{j+1}(v) &= \adv(\tau^c_j(v)) = \enc(\Phi(v,j)+1).
          \end{align}

        \item If \(v\) waits~\eqref{eq:waiting}, then \(\Phi(v, j+1) = \Phi(v,j)\). By Coherence~\ref{coh:1} and~\ref{coh:2} at \(C_j\),
          \begin{align}
            \kappa^c_{j+1}(v) &= H_{\Phi(v,j)}(v), \\
            \tau^c_{j+1}(v) &= \enc(\Phi(v,j)).
          \end{align}
      \end{enumerate}
      Coherence~\ref{coh:3} at \(C_{j+1}\) holds in both cases since both branches of~\eqref{eq:comb-cases} set \(\kappa^m_{j+1}(v) = \kappa^c_j(v)\) and \(\tau^m_{j+1}(v) = \tau^c_j(v)\).

      It remains to verify Coherence~\ref{coh:4}. For any edge \((v, w)\), if both endpoints advance or neither does, the gap is unchanged. Suppose exactly one advances; call it \(a\) and the other \(b\). Then \(|\Phi(a, j+1) - \Phi(b, j+1)| = |\Phi(a,j) + 1 - \Phi(b,j)|\), which exceeds \(1\) only if \(\Phi(b,j) = \Phi(a,j) - 1\). At \(C_j\), \(a\) advances and is therefore aligned, so by Lemma~\ref{lem:predicates}, \(\tau^m_j(b) = \enc(\Phi(a,j))\). By Coherence~\ref{coh:3} and~\ref{coh:2}, \(\tau^m_j(b) = \tau^c_{j-1}(b) = \enc(\Phi(b,j-1))\), and \(\Phi(b,j-1) \in \{\Phi(b,j)-1,\, \Phi(b,j)\} = \{\Phi(a,j)-2,\, \Phi(a,j)-1\}\). Since \(|\Phi(b,j-1) - \Phi(a,j)| \leq 2 < 3\), we have \(\enc(\Phi(b,j-1)) \neq \enc(\Phi(a,j))\), a contradiction.
    \end{proof}

    The completeness proof relies on two further helper lemmas. Lemma~\ref{lem:behind-aligned} shows that behind vertices are always aligned and therefore advance immediately. Lemma~\ref{lem:upper-bound} shows that no vertex simulates beyond \(k_\Gamma\).

    \begin{lemma}\label{lem:behind-aligned}
      For every \(j \in \Nat\) and every vertex \(v\), if \(v\) is behind at \(C_j\), then \(v\) is aligned at \(C_j\).
    \end{lemma}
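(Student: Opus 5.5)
The plan is to argue by strong induction on \(j\), using Lemma~\ref{lem:predicates}\ref{pred:aligned} to restate the goal and Lemma~\ref{lem:coherence} (with Coherence~\ref{coh:4} and monotonicity of \(\Phi\)) throughout. Fix \(v\) behind at \(C_j\) and write \(p \coloneqq \Phi(v,j)\). By Lemma~\ref{lem:predicates}\ref{pred:behind} there is a neighbour \(w\) with \(\Phi(w,j) = p+1\).

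By Lemma~\ref{lem:predicates}\ref{pred:aligned} and Coherence~\ref{coh:3}, being aligned at \(C_j\) (for \(j \geq 1\)) is equivalent to the following condition:
\begin{align}
  \Phi(u,j-1) = p \text{ for every neighbour } u \text{ of } v.
\end{align}
This is the condition I will aim for.

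\textbf{Base cases.} At \(j=0\) all \(\Phi\)-values are \(0\), so no vertex is behind and the claim is vacuous. At \(j=1\), \(v\) behind forces \(p=0\). Since \(\Phi(u,0)=0\) for all \(u\), the target condition holds.

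\textbf{Inductive step (\(j \geq 2\)), in three steps.}

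First, I show that \(v\) waited at step \(j-1\), i.e.\ \(\Phi(v,j-1)=p\). Suppose instead \(\Phi(v,j-1)=p-1\). Coherence~\ref{coh:4} forces \(\Phi(w,j-1) = p\), since the value \(p+1\) would create a gap of \(2\). So \(w\) advanced at \(C_{j-1}\) and was therefore aligned there. Lemma~\ref{lem:predicates}\ref{pred:aligned} then gives \(\Phi(v,j-2)=p\), contradicting monotonicity, since \(\Phi(v,j-2) \leq \Phi(v,j-1) = p-1\).

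Second, I show that \(w\) was at \(p\) at \(C_{j-1}\), i.e.\ \(\Phi(w,j-1) = p\). Suppose \(\Phi(w,j-1)=p+1\). Then \(v\) was behind at \(C_{j-1}\). By the induction hypothesis \(v\) was aligned there, and since behind implies eager, \(v\) advanced. This contradicts the first step. Hence \(w\) advanced at \(C_{j-1}\) and was aligned there. By Lemma~\ref{lem:predicates}\ref{pred:aligned}, this gives \(\Phi(v,j-2)=p\), so \(v\) was already at \(p\) two steps back.

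Third, take an arbitrary neighbour \(u\) of \(v\). Coherence~\ref{coh:4} gives \(\Phi(u,j-1) \in \{p-1,p,p+1\}\).
\begin{itemize}
  \item \(\Phi(u,j-1)=p+1\) is excluded by the same argument as in the second step: \(v\) would be behind at \(C_{j-1}\), hence aligned and eager by induction, hence advancing, contradicting the first step.
  \item \(\Phi(u,j-1)=p-1\) is excluded as follows. Monotonicity and Coherence~\ref{coh:4} at \(C_{j-2}\), together with \(\Phi(v,j-2)=p\), give \(\Phi(u,j-2)=p-1\). So \(u\) was behind at \(C_{j-2}\). By the induction hypothesis \(u\) was aligned, and being behind it was eager, so it advanced to \(p\) at \(C_{j-1}\). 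This contradicts \(\Phi(u,j-1)=p-1\).
\end{itemize}
Hence \(\Phi(u,j-1)=p\) for every neighbour \(u\), which is exactly the target condition, so \(v\) is aligned at \(C_j\).

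The main obstacle is the third step, specifically excluding a lagging neighbour \(u\) at \(p-1\). Its advertised traffic light \(\enc(p-1)\) would break alignment, and ruling it out needs the extra fact \(\Phi(v,j-2)=p\), which comes from \(w\)'s alignment in the second step. It also needs the induction hypothesis applied to a different vertex, namely \(u\), at an earlier step. This is why the induction must be over \(j\) for all vertices simultaneously (strong induction), rather than a per-vertex argument.
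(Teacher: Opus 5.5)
Your proof is correct and follows essentially the same route as the paper's. Both arguments use strong induction on \(j\), reduce alignment to \(\Phi(u,j-1)=\Phi(v,j)\) for every neighbour \(u\), use the ahead neighbour's alignment at \(C_{j-1}\) to obtain \(\Phi(v,j-2)=\Phi(v,j-1)=\Phi(v,j)\), and exclude the offsets \(\pm 1\) for each neighbour in the same way. The differences are only organisational: you prove \(\Phi(v,j-1)=p\) first without the induction hypothesis, whereas the paper starts from the ahead neighbour's previous index, and you treat \(j=1\) as a separate base case rather than inside the inductive step.
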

    \begin{proof}
      We proceed by strong induction on \(j\). At \(j = 0\), \(\Phi(u, 0) = 0\) for every vertex \(u\), so no vertex is behind. For \(j \geq 1\), suppose \(v\) is behind at converging state \(C_j\) with \(\Phi(v,j) = m\). Some neighbour \(u\) has \(\Phi(u,j) = m + 1\) by Lemma~\ref{lem:predicates}. We show that \(\Phi(w, j-1) = m\) for every neighbour \(w\) of \(v\), which implies that \(v\) is aligned at \(C_j\).

      First, we show that \(\Phi(v, j-1) = m\). Since \(\Phi(u, \cdot)\) increases by at most \(1\) per step, \(\Phi(u, j-1) \in \{m, m+1\}\). If \(\Phi(u, j-1) = m+1\), then Coherence~\ref{coh:4} and \(\Phi(v, j) = m\) give \(\Phi(v, j-1) = m\), so \(v\) is behind at \(C_{j-1}\). Since \(v\) is behind, by the inductive hypothesis \(v\) is aligned, and by the definition of eagerness, \(v\) is eager. Hence \(v\) advances, so \(\Phi(v, j) = m + 1\), a contradiction. Hence \(\Phi(u, j-1) = m\), and since \(\Phi(u,j) = m+1\), vertex \(u\) advanced from \(C_{j-1}\) to \(C_j\), requiring that \(u\) is aligned at \(C_{j-1}\). Since \(u\) is aligned at \(C_{j-1}\) and \(v\) is a neighbour of \(u\), we have \(\tau^m_{j-1}(v) = \enc(m)\).
      For \(j = 1\), the definition of \(\mathrm{In}'\) gives \(\tau^m_0(v) = \enc(0)\), so \(m = 0\) and \(\Phi(v, 0) = 0 = m\). For \(j \geq 2\), Coherence~\ref{coh:3} gives \(\tau^m_{j-1}(v) = \tau^c_{j-2}(v)\), so \(\tau^c_{j-2}(v) = \enc(m)\). By Coherence~\ref{coh:2}, \(\Phi(v, j-2) = m\), and since \(m = \Phi(v, j-2) \leq \Phi(v, j-1) \leq \Phi(v, j) = m\), we get \(\Phi(v, j-1) = m\).

      Second, we show that \(\Phi(w, j-1) = m\) for every neighbour \(w\) of \(v\). For any neighbour \(w\) of \(v\), Coherence~\ref{coh:4} at converging state \(C_{j-1}\) gives \(\Phi(w, j-1) \in \{m-1, m, m+1\}\). We show \(\Phi(w, j-1) = m\) by eliminating the other two possibilities.
      \begin{enumerate}
        \item If \(\Phi(w, j-1) = m + 1\), then \(v\) is behind at \(C_{j-1}\). Since \(v\) is behind, by the inductive hypothesis \(v\) is aligned, and by the definition of eagerness, \(v\) is eager. Hence \(v\) advances, contradicting \(\Phi(v, j) = m\).
        \item If \(\Phi(w, j-1) = m - 1\), then \(m \geq 1\). By its definition, \(j \geq \Phi(v, j)\), so \(j \geq \Phi(u, j) = m + 1 \geq 2\). Coherence~\ref{coh:4} at converging state \(C_{j-2}\) with \(\Phi(v, j-2) = m\) gives \(\Phi(w, j-2) \in \{m-1, m, m+1\}\), and since \(\Phi(w, j-2) \leq \Phi(w, j-1) = m - 1\) by definition, \(\Phi(w, j-2) = m - 1\). Hence \(w\) is behind at \(C_{j-2}\). Since \(w\) is behind, by the inductive hypothesis \(w\) is aligned, and by the definition of eagerness, \(w\) is eager. Hence \(w\) advances, so \(\Phi(w, j-1) \geq m\), a contradiction.
      \end{enumerate}

      Hence \(\Phi(w, j-1) = m\) for all neighbours \(w\). By Coherence~\ref{coh:2}, \(\tau^c_{j-1}(w) = \enc(m)\), and by Coherence~\ref{coh:3}, \(\tau^m_j(w) = \tau^c_{j-1}(w) = \enc(m) = \tau^c_j(v)\). Therefore \(v\) is aligned at \(C_j\).
    \end{proof}

    \begin{lemma}\label{lem:upper-bound}
      For every vertex \(v \in N_\Gamma\) and every \(j \in \Nat\), \(\Phi(v,j) \leq k_\Gamma\).
    \end{lemma}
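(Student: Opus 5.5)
We argue by contradiction. Suppose some vertex in \(N_\Gamma\) exceeds \(k_\Gamma\), and let \(j\) be the least index with \(\Phi(v,j) > k_\Gamma\) for some \(v \in N_\Gamma\). Since \(\Phi(\cdot,0)=0 \leq k_\Gamma\), we have \(j \geq 1\). Because \(\Phi\) increases by at most one per step, \(\Phi(v,j-1) = k_\Gamma\) and \(v\) advanced at \(C_{j-1}\). By minimality of \(j\), every vertex of \(N_\Gamma\) satisfies \(\Phi(\cdot,j-1) \leq k_\Gamma\). Neighbours of \(v\) lie in \(N_\Gamma\), since \(\Gamma\) is a connected component. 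So no neighbour \(u\) of \(v\) has \(\Phi(u,j-1) > \Phi(v,j-1)\), and by Lemma~\ref{lem:predicates}(\ref{pred:behind}) \(v\) is not behind at \(C_{j-1}\).

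Since \(v\) advanced, it was eager at \(C_{j-1}\). By Lemma~\ref{lem:predicates}(\ref{pred:eager}), this requires \(v\) to be behind or \(\mathrm{Hlt}(H_{\Phi(v,j-1)}(v)) = 0\). We have just ruled out the first option. The second says \(\mathrm{Hlt}(H_{k_\Gamma}(v)) = 0\), which contradicts the definition of \(k_\Gamma\): every vertex of \(N_\Gamma\), in particular \(v\), satisfies \(\mathrm{Hlt}(H_{k_\Gamma}(v)) = 1\). Hence no such \(j\) exists, and \(\Phi(v,j) \leq k_\Gamma\) for all \(v\in N_\Gamma\) and all \(j\).

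The step needing the most care is the use of Lemma~\ref{lem:predicates}. That lemma requires \(\Phi\) to be coherent until step \(j-1\), which Lemma~\ref{lem:coherence} supplies for all steps. We also need the fact that neighbourhoods stay inside \(N_\Gamma\), so that minimality of \(j\) bounds all neighbours of \(v\). No other ingredient is needed. In particular, graded-bisimulation invariance does not enter here; it is only needed later to compare \(\mathrm{Out}(H_{k_\Gamma}(v))\) with \(\mathrm{Out}(H_k(v))\).
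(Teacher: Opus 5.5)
Your proof is correct and matches the paper's argument essentially step for step: take a minimal counterexample, observe that \(v\) must have advanced from \(k_\Gamma\) and so was eager, then rule out being behind by minimality and rule out \(\neg\mathrm{Hlt}\) by the definition of \(k_\Gamma\). You also state explicitly that neighbours of \(v\) lie in \(N_\Gamma\) and that \(j \geq 1\), two details the paper uses without saying so.
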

    \begin{proof}
      Suppose for contradiction that \(\Phi(v, j) > k_\Gamma\) for some vertex \(v\) and some \(j \in \Nat\), and let \(j\) be minimal. Then \(\Phi(v, j-1) = k_\Gamma\) and \(v\) advances at \(C_{j-1}\), which requires \(v\) to be eager. By Coherence~\ref{coh:1}, \(\kappa^c_{j-1}(v) = H_{k_\Gamma}(v)\), and \(\mathrm{Hlt}(H_{k_\Gamma}(v))\) holds by definition of \(k_\Gamma\), so \(v\) is eager only if it is behind. By Lemma~\ref{lem:predicates}, being behind requires a neighbour \(u\) with \(\Phi(u, j-1) > k_\Gamma\), but \(\Phi(u, j-1) \leq k_\Gamma\) for all \(u\) by the choice of \(j\), a contradiction.
    \end{proof}

    \begin{proof}[Proof of Lemma~\ref{lem:completeness}]
      We show that if \(\Phi(v,j) < k_\Gamma\) for some \(v \in N_\Gamma\), then some vertex advances at \(C_j\) or \(C_{j+1}\). Let \(m = \min\{\Phi(v,j) : v \in N_\Gamma\} < k_\Gamma\). We identify a vertex \(v\) with \(\Phi(v,j) = m\) that is eager. If every vertex in \(N_\Gamma\) has index \(m\), then \(m < k_\Gamma\) implies some \(v\) satisfies \(\mathrm{Hlt}(H_m(v)) = 0\), so \(v\) is eager by Lemma~\ref{lem:predicates}. Otherwise, connectedness and Coherence~\ref{coh:4} yield a vertex \(v\) at index \(m\) with a neighbour at \(m + 1\), so \(v\) is behind and hence eager by Lemma~\ref{lem:predicates}.

      If \(v\) is behind at \(C_j\), Lemma~\ref{lem:behind-aligned} gives that \(v\) is aligned at \(C_j\), so \(v\) advances at \(C_j\). Otherwise, every vertex has index \(m\), and \(v\) is eager because \(\mathrm{Hlt}(H_m(v)) = 0\). If \(v\) is aligned at \(C_j\), then \(v\) advances at \(C_j\). Suppose \(v\) is not aligned at \(C_j\). Since \(v\) does not advance, \(\tau^c_{j+1}(v) = \enc(m)\) and \(\kappa^c_{j+1}(v) = \kappa^c_j(v)\). Since every neighbour \(w\) has \(\Phi(w,j) = m\), Coherence~\ref{coh:2} gives \(\tau^c_j(w) = \enc(m)\), and by Coherence~\ref{coh:3}, \(\tau^m_{j+1}(w) = \tau^c_j(w) = \enc(m)\), so \(v\) is aligned at \(C_{j+1}\). Since \(\kappa^c_{j+1}(v) = \kappa^c_j(v)\), \(\mathrm{Hlt}(H_m(v)) = 0\) persists, so \(v\) is eager at \(C_{j+1}\) and advances at \(C_{j+1}\).

      By Lemma~\ref{lem:upper-bound}, \(\Phi(v,j) \leq k_\Gamma\) for all \(v \in N_\Gamma\) and \(j \in \Nat\). Since \(\Phi\) is non-decreasing per vertex, and some vertex advances whenever \(\Phi(v,j) < k_\Gamma\) for some \(v\), we obtain \(\Phi(v, j') = k_\Gamma\) for all \(v \in N_\Gamma\) after finitely many steps.
    \end{proof}

    Once \(\Phi(v, j') = k_\Gamma\) for all \(v\), no vertex is behind or eager, so every vertex waits and the configuration stabilises after one additional step.

    \begin{proof}[Proof of Lemma~\ref{lem:convergence}]
      By Lemma~\ref{lem:completeness}, \(\Phi(v, j') = k_\Gamma\) for all \(v\). By Lemma~\ref{lem:upper-bound}, \(\Phi(v,j) \leq k_\Gamma\) for all \(j\). Since \(\Phi\) is also non-decreasing, \(\Phi(v, j) = k_\Gamma\) for all \(j \geq j'\). Since no neighbour has a different index, no vertex is behind by Lemma~\ref{lem:predicates}. By Coherence~\ref{coh:1}, \(\kappa^c_j(v) = H_{k_\Gamma}(v)\), and \(\mathrm{Hlt}(H_{k_\Gamma}(v))\) holds by definition of \(k_\Gamma\), so no vertex is eager. Since advancing~\eqref{eq:advancing} requires eagerness, every vertex waits~\eqref{eq:waiting} at \(C_j\) for every \(j \geq j'\), which preserves \(\kappa^c\) and \(\tau^c\). By Coherence~\ref{coh:3}, \(\kappa^m_j(v) = \kappa^c_{j-1}(v)\) and \(\tau^m_j(v) = \tau^c_{j-1}(v)\) for \(j \geq j' + 1\), and since \(\kappa^c\) and \(\tau^c\) are preserved, \(\kappa^m_j(v) = \kappa^c_j(v)\) and \(\tau^m_j(v) = \tau^c_j(v)\). All four components of \(\mathcal{C}_{j+1}(v)\) therefore equal those of \(\mathcal{C}_j(v)\).
    \end{proof}

    The correctness proof connects the converging run's output to the halting RGNN's output via Coherence~\ref{coh:1} and invariance under graded bisimulation.

    \begin{proof}[Proof of Lemma~\ref{lem:output}]
      By Lemma~\ref{lem:completeness} and Coherence~\ref{coh:1}, \(\kappa^c_{j'}(v) = H_{k_\Gamma}(v)\). By Lemma~\ref{lem:convergence}, \(\mathcal{C}_{j+1}(v) = \mathcal{C}_j(v)\) for \(j \geq j' + 1\). Since all vertices share index \(k_\Gamma\) at \(C_{j'}\), no vertex is behind by Lemma~\ref{lem:predicates}, and \(\mathrm{Hlt}(H_{k_\Gamma}(v))\) holds by definition of \(k_\Gamma\), so no vertex is eager and none advances. Hence \(\kappa^c_{j'+1}(v) = \kappa^c_{j'}(v)\), and \(\kappa^c_j(v) = H_{k_\Gamma}(v)\) for all \(j \geq j'\), and hence \(\mathrm{Out}'(\mathcal{C}_j(v)) = \mathrm{Out}(H_{k_\Gamma}(v))\).

      It remains to show that \(\mathrm{Out}(H_{k_\Gamma}(v)) = \mathcal{H}(G)(v)\).
      As \(\Gamma\) is a connected component, the relation $Z = \{ (v,v) \mid v \in N_\Gamma\}$ is a graded bisimulation between \(G\) and \(\Gamma\). Writing \(H^\Gamma_j(v)\) for the feature vectors of the run on \(\Gamma\), AC-layers preserve graded bisimulations, so \(H_j(v) = H^\Gamma_j(v)\) for all \(j\). In particular, \(\mathrm{Hlt}(H^\Gamma_{k_\Gamma}(v))\) holds for all \(v \in N_\Gamma\), so by definition \(\mathcal{H}(\Gamma)(v) = \mathrm{Out}(H^\Gamma_{k_\Gamma}(v)) = \mathrm{Out}(H_{k_\Gamma}(v))\). As \(\mathcal{H}\) is invariant under graded bisimulation, \(\mathcal{H}(G)(v) = \mathcal{H}(\Gamma)(v) = \mathrm{Out}(H_{k_\Gamma}(v))\).
    \end{proof}

    \begin{proof}[Proof of Proposition~\ref{prop:halting-to-converging}]
      Let \(\mathcal{C}\) be the RGNN constructed in Section~\ref{sec:construction}. By Lemma~\ref{lem:convergence}, the configuration of every vertex stabilises, so \(\mathcal{C}\) is a converging RGNN. By Lemma~\ref{lem:output}, the converged output at each vertex \(v\) equals \(\mathcal{H}(G)(v)\). The simplicity claim follows from Lemma~\ref{lem:simplicity}.
    \end{proof}

    \section{Related Work}\label{sec:related}

    The recurrent models of Scarselli et al.~\shortcite{Scarselli-2009-The-Graph-Neural}, Pflueger et al.~\shortcite{Pflueger-2024-Recurrent-Graph-Neural}, and Bollen et al.~\shortcite{Bollen-2025-Halting-Recurrent-GNNs} are discussed in Section~\ref{sec:rgnn}. Here we discuss two additional recurrent models with different output semantics.

    Ahvonen et al.~\shortcite{Ahvonen-2024-Logical-characterizations-of} study a visiting-acceptance semantics: a vertex is classified as true if its feature vector visits a designated accepting set at least once during the infinite run. They show that their RGNNs over reals capture countable disjunctions of GML formulas (\(\omega\)GML), while their RGNNs over floats capture the graded modal substitution calculus. As noted by Ahvonen et al., \(\omega\)GML and μGML are orthogonal in expressivity: visiting-acceptance can express properties not in μGML, such as the centre-point property, while μGML can express properties not in \(\omega\)GML, such as the non-reachability property~\cite{Kuusisto-2013-Modal-Logic-and}.

    Rosenbluth and Grohe~\shortcite{Rosenbluth-2025-Repetition-Makes-Perfect} study RGNNs where each vertex has a completion bit; the output is determined by the first feature vector where this bit is set. If given the graph size as input, these networks can compute any computable vertex function invariant under colour refinement. Since their model receives the graph size as additional input, it operates under different assumptions and is not directly comparable to ours.

    \section{Discussion}\label{sec:discussion}

    We have shown that, over undirected graphs, converging RGNNs and graded-bisimulation-invariant halting RGNNs express exactly the same vertex classifiers. Combined with the result of Bollen et al.\ that every μGML formula is expressible by a halting RGNN, yielding Corollary~\ref{cor:mu-gml}, and assuming the finitary Janin-Walukiewicz theorem extends to graded bisimulation, this would give a tight characterisation within MSO: the μGML-definable classifiers are exactly those that are both MSO-definable and expressible by a converging or output-converging RGNN.

    The restriction to MSO in this characterisation is necessary. For instance, the classifier that checks whether a vertex has equally many green neighbours as red neighbours is invariant under graded bisimulation and expressible by a converging RGNN (in a single iteration), but is not MSO-definable. A full logical characterisation of converging RGNN classifiers, beyond the MSO-definable fragment, remains open.

    Our equivalence is stated for undirected graphs. Given a halting RGNN with complete run \(H_0, \ldots, H_k\) on input \(G\), each connected component \(\Gamma\) converges independently: writing \(k_\Gamma\) for the first index at which all vertices in \(\Gamma\) satisfy \(\mathrm{Hlt}\), we may have \(k_\Gamma < k\), in which case no vertex in \(\Gamma\) ever simulates the halting states \(H_i\) for \(k_\Gamma < i \leq k\).

    The correctness argument relies on two properties: because our graphs are undirected, all vertices within a component converge at the same step \(k_\Gamma\), and because our RGNNs are invariant under graded bisimulation, \(\mathrm{Out}(H_{k_\Gamma}(v)) = \mathrm{Out}(H_k(v))\) for all vertices \(v\) in \(\Gamma\).

    On directed graphs, the first property does not hold. A vertex \(v\) can converge at some earlier state \(H_{k'}\), and vertices that still depend on it have no way to communicate to \(v\) that it has fallen behind. Even if \(\mathrm{Out}(H_{k'}(v)) = \mathrm{Out}(H_k(v))\), vertices that \(v\) sends messages to will never receive a message from \(v\) that reflects the halting state \(H_{k'+1}\) or later. By Lemma~\ref{lem:aligned-messages}, they will need these intermediate states to calculate the correct next snapshot, so the simulation will fail. Hence, whether some halting RGNNs can be faithfully simulated by converging RGNNs on directed graphs remains open.

    The halting-to-converging construction uses summation for the traffic light components, but this choice is not essential. Any aggregation function that preserves the support of the multiset of neighbours' traffic lights suffices, since this is all that \(\BEHIND\) and \(\ALIGNED\) require. Both mean and componentwise maximum satisfy this property: with one-hot traffic light vectors, each component of the aggregated result is nonzero if and only if at least one neighbour holds that value. If one were to redefine simple RGNNs with a different aggregation function, however, the simplicity proof becomes more sensitive. The proof converts aggregated traffic light norms into \(\{0, 1\}\) indicators via \(\min(x, 1) = x - \mathrm{ReLU}(x - 1)\), which requires these values to be positive integers. Summation yields neighbour counts, so this holds; componentwise maximum on one-hot vectors produces values in \(\{0, 1\}\) directly, so the same argument applies. Mean aggregation, however, produces rationals: a single ahead neighbour among three yields \(1/3\), not \(1\). Since no fixed simple function maps every positive real to \(1\) and \(0\) to \(0\), the thresholding step fails. Whether the simplicity result can be recovered for mean aggregation remains open.


\begin{thebibliography}{}

      \bibitem[\protect\citeauthoryear{Ahvonen \bgroup et al\mbox.\egroup
      }{2024}]{Ahvonen-2024-Logical-characterizations-of}
      Ahvonen, V.; Heiman, D.; Kuusisto, A.; and Lutz, C.
      \newblock 2024.
      \newblock Logical characterizations of recurrent graph neural networks with
      reals and floats.
      \newblock In Globerson, A.; Mackey, L.; Belgrave, D.; Fan, A.; Paquet, U.;
      Tomczak, J.; and Zhang, C., eds., {\em Advances in Neural Information
      Processing Systems}, volume~37,  104205--104249.
      \newblock Curran Associates, Inc.

      \bibitem[\protect\citeauthoryear{Ahvonen, Heiman, and
      Kuusisto}{2025}]{Ahvonen-2025-Graph-neural-networks}
      Ahvonen, V.; Heiman, D.; and Kuusisto, A.
      \newblock 2025.
      \newblock Graph neural networks and mso.

      \bibitem[\protect\citeauthoryear{Barceló \bgroup et al\mbox.\egroup
      }{2020}]{Barceló-2020-The-Logical-Expressiveness}
      Barceló, P.; Kostylev, E.~V.; Monet, M.; Pérez, J.; Reutter, J.; and Silva,
      J.~P.
      \newblock 2020.
      \newblock The logical expressiveness of graph neural networks.
      \newblock In {\em International Conference on Learning Representations}.

      \bibitem[\protect\citeauthoryear{Bollen \bgroup et al\mbox.\egroup
      }{2025}]{Bollen-2025-Halting-Recurrent-GNNs}
      Bollen, J.; Van~den Bussche, J.; Vansummeren, S.; and Virtema, J.
      \newblock 2025.
      \newblock {Halting Recurrent GNNs and the Graded mu-Calculus}.
      \newblock In {\em {Proceedings of the 22nd International Conference on
      Principles of Knowledge Representation and Reasoning}},  175--184.

      \bibitem[\protect\citeauthoryear{Colcombet, Doumane, and
      Kuperberg}{2025}]{Colcombet-2025-Tree-Algebras-and}
      Colcombet, T.; Doumane, A.; and Kuperberg, D.
      \newblock 2025.
      \newblock {Tree Algebras and Bisimulation-Invariant MSO on Finite Graphs}.
      \newblock In Censor-Hillel, K.; Grandoni, F.; Ouaknine, J.; and Puppis, G.,
      eds., {\em 52nd International Colloquium on Automata, Languages, and
      Programming (ICALP 2025)}, volume 334 of {\em Leibniz International
      Proceedings in Informatics (LIPIcs)},  152:1--152:16.
      \newblock Dagstuhl, Germany: Schloss Dagstuhl -- Leibniz-Zentrum f{\"u}r
      Informatik.

      \bibitem[\protect\citeauthoryear{Corso \bgroup et al\mbox.\egroup
      }{2020}]{Corso-2020-Principal-Neighbourhood-Aggregation}
      Corso, G.; Cavalleri, L.; Beaini, D.; Li\`{o}, P.; and Veli\v{c}kovi\'{c}, P.
      \newblock 2020.
      \newblock Principal neighbourhood aggregation for graph nets.
      \newblock In Larochelle, H.; Ranzato, M.; Hadsell, R.; Balcan, M.; and Lin, H.,
      eds., {\em Advances in Neural Information Processing Systems}, volume~33,
      13260--13271.
      \newblock Curran Associates, Inc.

      \bibitem[\protect\citeauthoryear{Gilmer \bgroup et al\mbox.\egroup
      }{2017}]{Gilmer-2017-Neural-message-passing}
      Gilmer, J.; Schoenholz, S.~S.; Riley, P.~F.; Vinyals, O.; and Dahl, G.~E.
      \newblock 2017.
      \newblock Neural message passing for quantum chemistry.
      \newblock In {\em Proceedings of the 34th International Conference on Machine
      Learning - Volume 70}, ICML'17,  1263–1272.
      \newblock JMLR.org.

      \bibitem[\protect\citeauthoryear{Grohe}{2021}]{Grohe-2021-The-Logic-of}
      Grohe, M.
      \newblock 2021.
      \newblock The logic of graph neural networks.
      \newblock In {\em 2021 36th Annual ACM/IEEE Symposium on Logic in Computer
      Science (LICS)},  1--17.

      \bibitem[\protect\citeauthoryear{Hamilton, Ying, and
      Leskovec}{2017}]{Hamilton-2017-Inductive-Representation-Learning}
      Hamilton, W.; Ying, Z.; and Leskovec, J.
      \newblock 2017.
      \newblock Inductive representation learning on large graphs.
      \newblock In Guyon, I.; Luxburg, U.~V.; Bengio, S.; Wallach, H.; Fergus, R.;
      Vishwanathan, S.; and Garnett, R., eds., {\em Advances in Neural Information
      Processing Systems}, volume~30.
      \newblock Curran Associates, Inc.

      \bibitem[\protect\citeauthoryear{Hamilton}{2020}]{Hamilton-2020-Graph-Representation}
      Hamilton, W.~L.
      \newblock 2020.
      \newblock Graph representation learning.
      \newblock {\em Synthesis Lectures on Artificial Intelligence and Machine
      Learning} 14(3):1--159.

      \bibitem[\protect\citeauthoryear{Janin and
      Lenzi}{2001}]{Janin-2001-Relating-Levels-of}
      Janin, D., and Lenzi, G.
      \newblock 2001.
      \newblock Relating levels of the mu-calculus hierarchy and levels of the
      monadic hierarchy.
      \newblock In {\em 16th Annual {IEEE} Symposium on Logic in Computer Science,
      Boston, Massachusetts, USA, June 16-19, 2001, Proceedings},  347--356.
      \newblock {IEEE} Computer Society.

      \bibitem[\protect\citeauthoryear{Kipf and
      Welling}{2017}]{Kipf-2017-Semi-Supervised-Classification}
      Kipf, T.~N., and Welling, M.
      \newblock 2017.
      \newblock Semi-supervised classification with graph convolutional networks.
      \newblock In {\em International Conference on Learning Representations}.

      \bibitem[\protect\citeauthoryear{Kuusisto}{2013}]{Kuusisto-2013-Modal-Logic-and}
      Kuusisto, A.
      \newblock 2013.
      \newblock {Modal Logic and Distributed Message Passing Automata}.
      \newblock In Ronchi Della~Rocca, S., ed., {\em Computer Science Logic 2013 (CSL
      2013)}, volume~23 of {\em Leibniz International Proceedings in Informatics
      (LIPIcs)},  452--468.
      \newblock Dagstuhl, Germany: Schloss Dagstuhl -- Leibniz-Zentrum f{\"u}r
      Informatik.

      \bibitem[\protect\citeauthoryear{Morris \bgroup et al\mbox.\egroup
      }{2019}]{Morris-2019-Weisfeiler-and-Leman}
      Morris, C.; Ritzert, M.; Fey, M.; Hamilton, W.~L.; Lenssen, J.~E.; Rattan, G.;
      and Grohe, M.
      \newblock 2019.
      \newblock Weisfeiler and leman go neural: Higher-order graph neural networks.
      \newblock {\em Proceedings of the AAAI Conference on Artificial Intelligence}
      33(01):4602--4609.

      \bibitem[\protect\citeauthoryear{Pflueger, Tena~Cucala, and
      Kostylev}{2024}]{Pflueger-2024-Recurrent-Graph-Neural}
      Pflueger, M.; Tena~Cucala, D.; and Kostylev, E.~V.
      \newblock 2024.
      \newblock Recurrent graph neural networks and their connections to bisimulation
      and logic.
      \newblock {\em Proceedings of the AAAI Conference on Artificial Intelligence}
      38(13):14608--14616.

      \bibitem[\protect\citeauthoryear{Rosenbluth and
      Grohe}{2025}]{Rosenbluth-2025-Repetition-Makes-Perfect}
      Rosenbluth, E., and Grohe, M.
      \newblock 2025.
      \newblock Repetition makes perfect: Recurrent graph neural networks match
      message-passing limit.

      \bibitem[\protect\citeauthoryear{Scarselli \bgroup et al\mbox.\egroup
      }{2009}]{Scarselli-2009-The-Graph-Neural}
      Scarselli, F.; Gori, M.; Tsoi, A.~C.; Hagenbuchner, M.; and Monfardini, G.
      \newblock 2009.
      \newblock The graph neural network model.
      \newblock {\em IEEE Transactions on Neural Networks} 20(1):61--80.

      \bibitem[\protect\citeauthoryear{Walukiewicz}{2002}]{Walukiewicz-2002-Monadic-second-order}
      Walukiewicz, I.
      \newblock 2002.
      \newblock Monadic second-order logic on tree-like structures.
      \newblock {\em Theor. Comput. Sci.} 275(1–2):311–346.

      \bibitem[\protect\citeauthoryear{Xu \bgroup et al\mbox.\egroup
      }{2019}]{Xu-2019-How-Powerful-are}
      Xu, K.; Hu, W.; Leskovec, J.; and Jegelka, S.
      \newblock 2019.
      \newblock How powerful are graph neural networks?
      \newblock In {\em International Conference on Learning Representations}.

    \end{thebibliography}
    \end{document}